\documentclass[11pt,a4paper]{article}

\usepackage{amsmath,amsfonts,amscd,amsthm}
\usepackage{epsfig,psfrag}
\usepackage{latexsym}
\usepackage{dsfont,enumerate,exscale}
\usepackage[boxruled]{algorithm2e}

\addtolength{\topmargin}{-0.5in}
\addtolength{\textheight}{1.0in}

\newcommand{\bmat}{\begin{pmatrix}}
\newcommand{\emat}{\end{pmatrix}}

\newcommand{\N}{\mathbb{N}}
\newcommand{\R}{\mathbb{R}}
\newcommand{\Order}{\mathcal{O}}
\newcommand{\Iup}{I_{\text{up}}}
\newcommand{\Idown}{I_{\text{down}}}
\newcommand{\Feas}{\mathcal{R}}

\newcommand{\Opt}{\mathcal{R^*}}

\newcommand{\argmax}{\arg \max}
\newcommand{\sign}{\operatorname{sign}}

\newcommand{\GI}{\R^{\geq 0} \cup \{\infty\}}
\newcommand{\B}{\mathcal{B}}

\newcommand{\ds}{\texttt}
\newcommand{\dss}{\small \texttt}
\newcommand{\tabfont}{\small}

\newtheorem{definition}{Definition}
\newtheorem{lemma}{Lemma}
\newtheorem{theorem}{Theorem}
\newtheorem{corollary}{Corollary}

\begin{document}

\title{The Planning-ahead SMO Algorithm}

\author{Tobias Glasmachers \\
		\texttt{tobias.glasmachers@ini.rub.de} \\
		Optimization of Adaptive Systems Group\\
		Institut f\"ur Neuroinformatik\\
		Ruhr-Universit\"at Bochum\\
		Germany}

\maketitle

\begin{abstract}%
The sequential minimal optimization (SMO) algorithm and variants thereof
are the de facto standard method for solving large quadratic programs
for support vector machine (SVM) training. In this paper we propose a
simple yet powerful modification. The main emphasis is on an algorithm
improving the SMO step size by planning-ahead. The theoretical analysis
ensures its convergence to the optimum. Experiments involving a large
number of datasets were carried out to demonstrate the superiority of
the new algorithm.
\end{abstract}

\noindent \textbf{Keywords:} Sequential Minimal Optimization,
Quadratic Programming, Support Vector Machine Training

\section{Introduction}

Training a support vector machine (SVM) for binary classification is
usually accomplished through solving a quadratic program. Assume we are
given a training dataset
$(x_1, y_1), \dots, (x_{\ell}, y_{\ell})$
composed of inputs $x_i \in X$ and binary labels $y_i \in \{\pm1\}$,
a positive semi-definite Mercer kernel function $k:X \times X \to \R$ on
the input space $X$ and a regularization parameter value $C > 0$. Then
the dual SVM training problem is given by
\begin{align}
	\text{maximize} \quad & f(\alpha) = y^T \alpha - \frac{1}{2} \alpha^T K \alpha \label{eq:QP} \\
	\text{s.t.} \quad & \sum_{i=1}^{\ell} \alpha_i = 0 \hspace{25mm} \tag{equality constraint} \\
	\text{and} \quad & L_i \leq \alpha_i \leq U_i \enspace\enspace \forall \enspace 1 \leq i \leq \ell \qquad \tag{box constraint}
\end{align}
for $\alpha \in \R^{\ell}$. Here, the vector
$y = (y_1, \dots, y_{\ell})^T \in \R^{\ell}$ is composed of the labels,
the positive semi-definite kernel Gram matrix
$K \in \R^{\ell \times \ell}$ is given by $K_{ij} = k(x_i, x_j)$ and the
lower and upper bounds are $L_i = \min\{0, y_i C\}$ and
$U_i = \max\{0, y_i C\}$. We denote the feasible region by
$\Feas$ and the set of optimal points by $\Opt$.
On $\Opt$ the objective function $f$ attains its maximum denoted by
$f^* = \max \{f(\alpha) \,|\, \alpha \in \Feas \}$.

Solving this problem up to a sufficient accuracy seems to scale roughly
quadratic in $\ell$ in practice~\cite{joachims:99}. This relatively bad
scaling behavior is one of the major drawbacks of SVMs in general as the
number $\ell$ of training examples may easily range to hundreds of
thousands or even millions in today's pattern recognition problems.

\section{State of the Art SMO Algorithm}

The sequential minimal optimization (SMO) algorithm~\cite{platt:99} is
an iterative decomposition algorithm~\cite{osuna:97} using minimal
working sets of size two.
This size is minimal to keep the current solution feasible. The
algorithm explicitly exploits the special structure of the constraints
of problem~\eqref{eq:QP} and shows very good performance in practice.
For each feasible point $\alpha \in \Feas$ we define the index sets
\begin{align*}
	\Iup(\alpha) = & \{ i \in \{1, \dots, \ell\} \enspace|\enspace \alpha_i < U_i \} \\
	\Idown(\alpha) = & \{ i \in \{1, \dots, \ell\} \enspace|\enspace \alpha_i > L_i \}
	\enspace.
\end{align*}
The canonical form of the SMO algorithm (using the common
Karush-Kuhn-Tucker (KKT) violation stopping condition)
can be stated as follows:

\begin{algorithm}
	\SetKwRepeat{InfiniteLoop}{do}{loop}
	\KwIn{feasible initial point $\alpha^{(0)}$, accuracy $\varepsilon \geq 0$}
	compute the initial gradient $G^{(0)} \leftarrow \nabla f(\alpha^{(0)}) = y - K \alpha^{(0)}$

	set $t \leftarrow 1$

	\InfiniteLoop{}
	{
		\lnl{SMO-step-select-ws} select a working set $B^{(t)}$

		\lnl{SMO-step-solve} solve the sub-problem induced by $B^{(t)}$ and $\alpha^{(t-1)}$, resulting in $\alpha^{(t)}$

		\lnl{SMO-step-gradient-update} compute the gradient $G^{(t)} \leftarrow \nabla f(\alpha^{(t)}) = G^{(t-1)} - K \left( \alpha^{(t)} - \alpha^{(t-1)} \right)$

		\lnl{SMO-step-stop} stop if $\left( \max \big\{ G^{(t)}_i\ \,\big|\, i \in \Iup(\alpha^{(t)}) \big\} - \min \big\{ G^{(t)}_j \,\big|\, j \in \Idown(\alpha^{(t)}) \big\} \right) \leq \varepsilon$

		set $t \leftarrow t + 1$
	}
	\caption{General SMO Algorithm}
	\label{algo:SMO}
\end{algorithm}

If no additional information are available the initial solution is
chosen to be $\alpha^{(0)} = (0, \dots, 0)^T$ resulting in the initial
gradient $G^{(0)} = \nabla f(\alpha^{(0)}) = y$ which can be computed
without any kernel evaluations.

It is widely agreed that the working set selection policy is crucial for
the overall performance of the algorithm. This is because starting from
the initial solution the SMO algorithm generates a sequence
$(\alpha^{(t)})_{t \in \N}$ of solutions which is determined by the
sequence of working sets $(B^{(t)})_{t \in \N}$.
We will briefly discuss some concrete working set selection policies
later on.

First we will fix our notation.
In each iteration the algorithm selects a working set of size two. In
this work we will consider (ordered) tuples instead of sets for a number
of reasons. Of course we want our tuples to correspond to sets of
cardinality two. Therefore a working set $B$ is of the form $(i, j)$
with $i \not= j$. Due to its wide spread we will stick to the term
working {\em set} instead of tuple as long as there is no ambiguity.
Whenever we need to refer to the corresponding set, we will use the
notation $\widehat{B} = \widehat{(i, j)} := \{i, j\}$.
For a tuple $B = (i, j)$ we define the direction $v_B = e_i - e_j$ where
$e_n$ is the $n$-th unit vector of $\R^{\ell}$. This direction has a
positive component for $\alpha_i$ and a negative component for
$\alpha_j$. We will restrict the possible choices such that the current
point $\alpha$ can be moved in the corresponding direction $v_B$ without
immediately leaving the feasible region. This is equivalent to
restricting $i$ to $\Iup(\alpha)$ and $j$ to $\Idown(\alpha)$. We
collect the allowed working sets in a point $\alpha$ in the set
$\B(\alpha) = \Iup(\alpha) \times \Idown(\alpha) \setminus \{(n, n) \,|\, 1 \leq n \leq \ell\}$.
With this notation a working set selection policy returns some
$B^{(t)} \in \B(\alpha^{(t-1)})$.

The sub-problem induced by the working set $B^{(t)}$ solved in
step~\ref{SMO-step-solve} in iteration~$t$ is defined as
\begin{align*}
	\text{maximize} \quad & f(\alpha^{(t)}) = y^T \alpha^{(t)} - \frac{1}{2} (\alpha^{(t)})^T K \alpha^{(t)} \\
	\text{s.t.} \quad & \sum_{i=1}^{\ell} \alpha^{(t)}_i = 0 \hspace{26mm} & \text{(equality constraint)} \\
	\text{} \quad & L_i \leq \alpha^{(t)}_i \leq U_i \text{ for } i \in \widehat{B}^{(t)} \qquad & \text{(box constraint)} \\
	\text{and} \quad & \alpha^{(t)}_i = \alpha^{(t-1)}_i \text{ for } i \not\in \widehat{B}^{(t)}
	\enspace.
\end{align*}
That is, we solve the quadratic program as good as possible while
keeping all variables outside the current working set constant.
We can incorporate the equality constraint into the parameterization
$\alpha^{(t)} = \alpha^{(t-1)} + \mu^{(t)} v_{B^{(t)}}$ and arrive at
the equivalent problem
\begin{align*}
	\text{maximize} \quad & l_t \mu^{(t)} - \frac{1}{2} Q_{tt} (\mu^{(t)})^2 \\
	\text{s.t.}     \quad & \tilde L_t \leq \mu^{(t)} \leq \tilde U_t
\end{align*}
for $\mu^{(t)} \in \R$ with
\begin{align*}
	Q_{tt} = & K_{ii} - 2 K_{ij} + K_{jj} = v_{B^{(t)}}^T K v_{B^{(t)}} \\
	l_t = & \frac{\partial f}{\partial \alpha_{i}}(\alpha^{(t-1)}) - \frac{\partial f}{\partial \alpha_{j}}(\alpha^{(t-1)}) = v_{B^{(t)}}^T \nabla f(\alpha^{(t-1)}) \\
	\tilde L_t = & \max \{ L_i - \alpha_i^{(t-1)}, \alpha_j^{(t-1)} - U_j \} \\
	\tilde U_t = & \min \{ U_i - \alpha_i^{(t-1)}, \alpha_j^{(t-1)} - L_j \}
\end{align*}
and the notation $B^{(t)} = (i, j)$.
This problem is solved by clipping the Newton step $\mu^* = l_t/Q_{tt}$
to the bounds:
\begin{align}
	\mu^{(t)} = \max \left\{ \min \left\{ \frac{l_t}{Q_{tt}}, \tilde U_t \right\}, \tilde L_t \right\} \label{eq:clipped-newton}
	\enspace.
\end{align}
For $\mu^{(t)} = l_t / Q_{tt}$ we call the SMO step free. In this case
the SMO step coincides with the Newton step in direction $v_{B^{(t)}}$.
Otherwise the step is said to hit the box boundary.

Recently it has been observed that the SMO step itself can be
used for working set selection resulting in so called second order
algorithms~\cite{fan:2005,glasmachers:2006}.
We can formally define the gain of a SMO step as the function
$g_B(\alpha)$ which computes the difference $f(\alpha') - f(\alpha)$ of
the objective function before and after a SMO step with starting point
$\alpha$ on the working set $B$, resulting in $\alpha'$. For each
working set $B$ this function is continuous and piecewise quadratic
(see \cite{glasmachers:2006}). Then these algorithms greedily choose a
working set~$B^{(t)}$ promising the largest functional gain
$g_{B^{(t)}}(\alpha^{(t-1)}) = f(\alpha^{(t)}) - f(\alpha^{(t-1)})$ by
heuristically evaluating a subset of size $\Order(\ell)$ of the
possible working sets $\B(\alpha^{(t-1)})$.

Fan et al.~\cite{fan:2005} propose to choose the working set according to
\begin{align}
	i = \, & \argmax \left\{ \left. \frac{\partial f}{\partial \alpha_n}(\alpha) \enspace\right|\enspace n \in \Iup(\alpha) \right\} \notag \\
	j = \, & \argmax \Big\{ \tilde g_{(i, n)}(\alpha) \enspace\Big|\enspace n \in \Idown(\alpha) \setminus \{i\} \Big\} \label{eq:wss} \\
	\text{with} \quad & \tilde g_B(\alpha) = \frac{1}{2} \frac{(v_B^T \nabla f(\alpha))^2}{v_B^T K v_B} \in \GI \notag
\end{align}
where $\tilde g_B(\alpha)$ is an upper bound on the gain which is exact
if and only if the step starting from $\alpha$ with working set $B$ is
not constrained by the box.%
\footnote{The software LIBSVM~\cite{fan:2005} sets the denominator of
$\tilde g_B(\alpha)$ to $\tau = 10^{-12} > 0$ whenever it vanishes.
This way the infinite value is avoided. However, this trick was
originally designed to tackle indefinite problems.}
Note that in this case the Newton
step~$\mu^* = (v_B^T \nabla f(\alpha)) / (v_B^T K v_B)$
in direction $v_B$ is finite and we get the alternative formulation
\begin{align}
	\tilde g_B(\alpha) = \frac{1}{2} (v_B^T K v_B) (\mu^*)^2 \label{eq:newton-gain}
	\enspace.
\end{align}
This formula can be used to explicitly compute the exact SMO gain
$g_B(\alpha)$ by plugging in the clipped step
size~\eqref{eq:clipped-newton} instead of the Newton step~$\mu^*$.

The stopping condition in step~\ref{SMO-step-stop} checks if the
Karush-Kuhn-Tucker (KKT) conditions of problem~\eqref{eq:QP} are
fulfilled with the predefined accuracy~$\varepsilon$. List et al.~\cite{list:2007}
have shown that this is a meaningful stopping criterion. The accuracy
$\varepsilon$ is usually set to $0.001$ in practice.

SMO is a specialized version of the more general decomposition algorithm
which imposes the weaker condition $|B^{(t)}| \leq q \ll \ell$ on the
working set size. The main motivation for decomposition is that in each
step only the rows of the kernel matrix $K$ which correspond to the
working set indices are needed. Therefore the algorithm works well even
if the whole matrix $K$ does not fit into the available working memory.
The SMO algorithm has the advantage over decomposition with larger
working sets that the sub-problems in step~\ref{SMO-step-solve} can be
solved very easily. Because of its minimal working set size the
algorithm makes less progress in a single iteration compared to larger
working sets. On the other hand single iterations are faster. Thus,
there is a trade-off between the time per iteration and the number of
iterations needed to come close enough to the optimum. The decisive
advantage of SMO in this context is that it can take its decisions which
working set~$B$ (corresponding to the optimization direction~$v_B$) to
choose more frequently between its very fast iterations. This strategy
has proven beneficial in practice.

In elaborate implementations
the algorithm is accompanied by a kernel cache and a shrinking
heuristic~\cite{joachims:99}. The caching technique exploits the fact
that the SMO algorithm needs the rows of the kernel matrix which
correspond to the indices in the current working set $B^{(t)}$. The
kernel cache uses a predefined amount of working memory to store rows of
the kernel matrix which have already been computed. Therefore the
algorithm needs to recompute only those rows from the training data
evaluating the possibly costly kernel function which have not been used
recently. The shrinking heuristic removes examples from the problem that
are likely to end up at the box boundaries in the final solution.
These techniques perfectly cooperate and result in an enormous speed up
of the training process. We will later use the fact that the most
recently used rows of the kernel matrix $K$ are available from the
cache.

The steps~\ref{SMO-step-select-ws}, \ref{SMO-step-gradient-update}, and
\ref{SMO-step-stop} of the SMO optimization loop take $\Order(\ell)$
operations, while the update~\ref{SMO-step-solve} of the current
solution is done in constant time.

There has not been any work on the improvement of
step~\ref{SMO-step-solve} of Algorithm~\ref{algo:SMO}.
Of course, it is not possible to considerably speed up a computation
taking $\Order(1)$ operations, but we will see in the following how we
may replace the optimal (greedy) truncated Newton step with other
approaches.

\section{Behavior of the SMO Algorithm} \label{sec:behavior}

We want to make several empirical and theoretical statements about the
overall behavior of the SMO algorithm. This includes some motivation for
the algorithm presented later. We start with theoretical results.

It is well known that the algorithm converges to an optimum for a number
of working set selection strategies. Besides convergence proofs for
important special cases~\cite{keerthi-gilbert:2002,takahashi:2005,glasmachers:2006,glasmachers:esann08}
proof techniques for general classes of selection policies have been
investigated~\cite{hush:2003,list:2004,chen:2006}.

Chen et al.~\cite{chen:2006} have shown that under some technical conditions on
problem~\eqref{eq:QP} there exists $t_0$ such that no SMO step ends up
at the box bounds for iterations $t > t_0$. For these iterations the
authors derive a linear convergence rate.
However, the prerequisites exclude the relevant case that the optimum is
not isolated. Upper bounds for $t_0$ are not known, and in experiments
the algorithm rarely seems to enter this stage.

From an empirical point of view we can describe the qualitative behavior
of SMO roughly as follows. In the first iterations, starting from the
initial solution $\alpha^{(0)} = (0, \dots, 0)^T$, many steps move
variables $\alpha_i$ to the lower or upper bounds $L_i$ and $U_i$. After
a while, these steps become rare and most iterations are spent on a
relatively small number of variables performing free steps. In this
phase the shrinking heuristics removes most bounded variables from the
problem. Then working set selection, gradient update and stopping
condition need to be computed only on the relatively small set of active
variables, leading to extremely fast iterations.

Many common benchmark problems and real world applications are simple in
the sense that the algorithm performs only a number of iterations
comparable to the number of examples. In this case only very few
variables (if any) are changed many times. This indicates that there are
only very few free support vectors or that the dependencies between
variables are weak, making the optimization easy. On the other hand, for
harder problems the algorithm spends most of its iterations on free SMO
steps to resolve complicated dependencies between the free variables.
In fact, in some cases we can observe large blocks of iterations spent
on a small number of variables.
Due to its finite number of optimization directions the SMO algorithm is
prone to oscillate while compensating the second order cross terms of
the objective function. This oscillatory behavior observed in case of
difficult problems is the main motivation for the consideration
presented in the next section.

\section{Planning Ahead} \label{sec:planning-ahead}

Without loss of generality we consider the iteration $t=1$ in this
section.

Assume we are given the current working set
$B^{(1)} = (i^{(1)}, j^{(1)})$ and for some reason we already know the
working set $B^{(2)} = (i^{(2)}, j^{(2)})$ to be selected in
the next iteration.
In addition, we presume that the solutions of both sub-problems involved
are not at the bounds. That is, we can simply ignore the box constraints.
From equation~\eqref{eq:newton-gain} we know with
$\mu^{(1)} = l_1 / Q_{11}$ and $\mu^{(2)} = l_2 / Q_{22}$
that both free steps together result in the gain
\begin{align} \label{eq:gain1}
	g^{\text{2-step}} := f(\alpha^{(2)}) - f(\alpha^{(0)}) = \frac{1}{2} Q_{11} (\mu^{(1)})^2 + \frac{1}{2} Q_{22} (\mu^{(2)})^2
	\enspace.
\end{align}
Under the assumption that we already know the working set $B^{(2)}$ we
can of course precompute the second step. To stress this point of view
we introduce the quantities
\begin{align*}
	w_t = \frac{\partial f}{\partial \alpha_{i^{(t)}}}(\alpha^{(0)}) - \frac{\partial f}{\partial \alpha_{j^{(t)}}}(\alpha^{(0)}) = v_{B^{(t)}} \nabla f(\alpha^{(0)}) \qquad \text{for } t \in \{1, 2\}
\end{align*}
which only depend on $\alpha^{(0)}$ and are thus known in iteration
$t=1$. We rewrite
\begin{align*}
						 l_1 & = w_1 \\
						 l_2 & = w_2 - Q_{12} \mu^{(1)} \\
	\text{with} \quad Q_{12} & = Q_{21} = K_{i^{(1)}i^{(2)}} - K_{i^{(1)}j^{(2)}} - K_{j^{(1)}i^{(2)}} + K_{j^{(1)}j^{(2)}} = v_{B^{(1)}}^T K v_{B^{(2)}}
	\enspace.
\end{align*}
Then we can express the step size
\begin{align}
	\mu^{(2)} = l_2 / Q_{22} = w_2 / Q_{22} - Q_{12} / Q_{22} \mu^{(1)} \label{eq:mu2_mu1}
\end{align}
in these terms.
The above notation suggests the introduction of the $2 \times 2$ matrix
\begin{align*}
	Q = \bmat Q_{11} & Q_{21} \\ Q_{12} & Q_{22} \emat
\end{align*}
which is symmetric and positive semi-definite.
If we drop the assumption that both steps involved are Newton steps
the computation of the gain is more complicated:
\begin{align*}
	g^{\text{2-step}}(\mu^{(1)}, \mu^{(2)})
		:= f(\alpha^{(2)}) - f(\alpha^{(0)})
		= \bmat w_1 \\ w_2 \emat^T \bmat \mu^{(1)} \\ \mu^{(2)} \emat - \frac{1}{2} \bmat \mu^{(1)} \\ \mu^{(2)} \emat^T Q \bmat \mu^{(1)} \\ \mu^{(2)} \emat
\end{align*}
Plugging everything in, and in particular substituting $\mu^{(2)}$
according to eq.~\eqref{eq:mu2_mu1} we express the gain as a function of
the single variable $\mu^{(1)}$ resulting in
\begin{align} \label{eq:gain2}
	g^{\text{2-step}}(\mu^{(1)})
		= - \frac{1}{2} \cdot \frac{\det(Q)}{Q_{22}} (\mu^{(1)})^2
		+ \frac{Q_{22} w_1 - Q_{12} w_2}{Q_{22}} \mu^{(1)}
		+ \frac{1}{2} \cdot \frac{w_2^2}{Q_{22}}
	\enspace.
\end{align}
For $\mu^{(1)} = w_1 / Q_{11}$ we obtain the gain computed
in~\eqref{eq:gain1}, but of course the maximizer of the quadratic
function~\eqref{eq:gain2} will in general differ from this value. Thus,
under the above assumption that we already know the next working set
$B^{(2)}$ we can achieve a better functional gain by computing the
optimal step size
\begin{align}
	\mu^{(1)} = \frac{Q_{22} w_1 - Q_{12} w_2}{\det(Q)} \label{eq:planning-step}
\end{align}
where we again assume that we do not hit the box constraints which are
dropped.
It is easy to incorporate the constraints into the computation, but this
has two drawbacks in our situation: First, it leads to a large number of
different cases, and second it complicates the convergence proof.
Further, dropping the constraints will turn out to be no restriction, as
the algorithms resulting from these considerations will handle the box
constrained case separately. Figure~\ref{fig:TwoStep} illustrates the
resulting step.
We call the step $\mu^{(1)} \cdot v_{B^{(1)}}$ the planning-ahead step,
because we need to simulate the current and the next step in order to
determine the step size~$\mu^{(1)}$. Analogously we refer to~$\mu^{(1)}$
as the planning-ahead step size.

\begin{figure}[h]
\begin{center}
\psfrag{a}[c][l]{$v_{B^{(2)}}$}
\psfrag{b}[c][l]{$v_{B^{(1)}}$}
\includegraphics{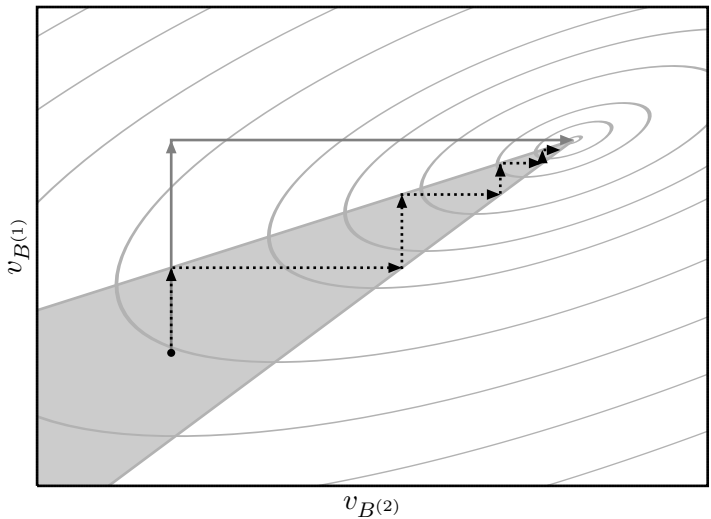}
\caption{\label{fig:TwoStep}
Optimization path with (dark gray) and without (dotted) planning ahead
in a minimal scenario composed of only two possible working sets
$B^{(1)}$ and $B^{(2)}$. The light gray ellipses indicate niveau sets of
the objective function~$f$. From these it is obvious that the first step
of the dark gray path results in a decrease of the objective function.
However, while the usual SMO procedure oscillates inside a cone bounded
by the hyperplanes $\{\alpha \,|\, v_{B^{(t)}}^T \nabla f(\alpha) = 0 \}$,
planning ahead one step results in the optimal step size to solve this
low dimensional problem.}
\end{center}
\end{figure}

Just like the usual SMO update this step can be computed in constant
time, that is, independent of the problem dimension~$\ell$. However, the
kernel values of an up to $4 \times 4$ principal minor of the kernel
Gram matrix $K$ are needed for the computation, in contrast to a
$2 \times 2$ minor for the standard SMO update step.

Note the asymmetry of the functional gain as well as the optimal step
size w.r.t.\ the iteration indices $1$ and $2$. We control the length of
the first step, which of course influences the length of the second
step. The asymmetry results from the fact that the second step is greedy
in contrast to the first one.
The first step is optimal given the next working set $B^{(2)}$ and
planning one step ahead, while the second step is optimal in the usual
sense of doing a single greedy step without any planning-ahead.

\begin{figure}[h]
\begin{center}
\psfrag{x}[l][l]{$\frac{\mu}{\mu^*}$}
\psfrag{y}[c][l]{$\tilde g$}
\psfrag{0}[c][l]{$0$}
\psfrag{1-e}[l][l]{$1 - \eta$}
\psfrag{1}[c][l]{$1$}
\psfrag{1+e}[r][l]{$1 + \eta$}
\psfrag{2}[c][l]{$2$}
\psfrag{g}[r][l]{$\tilde g^*$}
\psfrag{rg}[r][l]{$(1 - \eta^2) \tilde g^*$}
\psfrag{f}[l][l]{$\tilde g(\mu) = \left( 2 \frac{\mu}{\mu^*} - (\frac{\mu}{\mu^*})^2 \right) \tilde g^*$}
\includegraphics{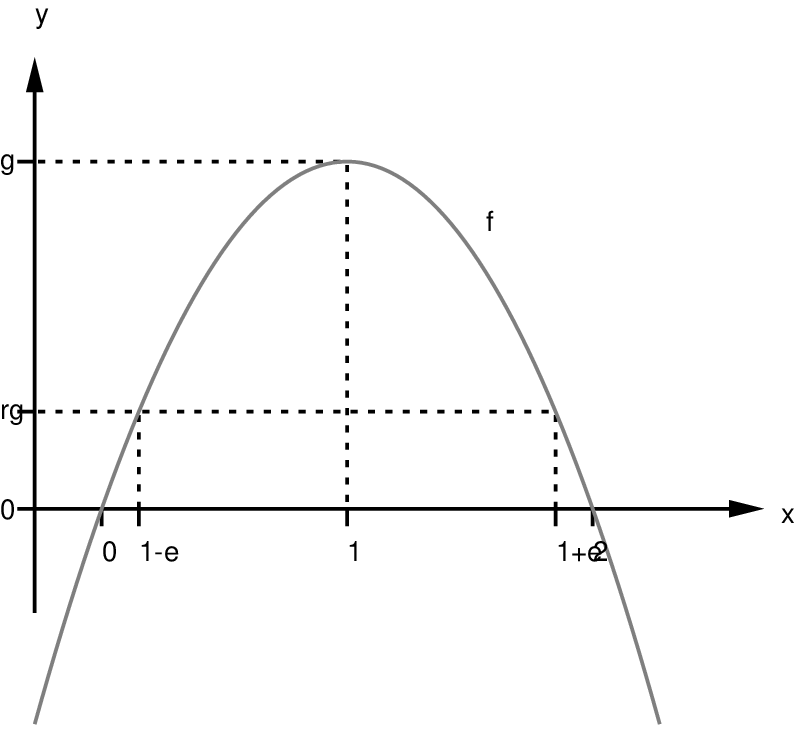}
\caption{\label{fig:parabola1}
Gain of the step with size $\mu$ compared to the Newton step size
$\mu^*$. As long as the quantity $\mu / \mu^*$ is in the open interval
$(0, 2)$ the step makes some progress. For the interval
$[1-\eta, 1+\eta]$ the gain is even strictly lower bounded by a fraction
of $1-\eta^2$ of the Newton step gain $\tilde g^*$.
Note that if the working set direction $v_B$ is in the kernel of the
matrix $K$ the parabola is degenerated to a linear function. Then we
distinguish two cases: If the linear term of $\tilde g$ vanishes we set
$\mu^* = 0$ and get $\tilde g^* = 0$. Otherwise the graph grows linearly
corresponding to $\mu^* = \pm\infty$ and $\tilde g^* = \infty$.}
\end{center}
\end{figure}

Another interesting property is that for
$\mu^{(1)} \not\in [0, 2 \, l_1 / Q_{11}]$ the first step actually
results in a decay of the dual objective, that is,
$f(\alpha^{(1)}) < f(\alpha^{(0)})$, see Figure~\ref{fig:parabola1}.
Nevertheless, such steps can be
extremely beneficial in some situations, see Figure~\ref{fig:TwoStep}.
Of course, by construction both planned-ahead steps together result in
an increase of the objective function, which is even maximized for the
given working sets.

\section{Algorithms} \label{sec:algorithms}

In this section we will turn the above consideration into algorithms.
We will present a first simple version and a refinement which focuses
on the convergence of the overall algorithm to an optimal solution.
These modifications are all based on the SMO~Algorithm~\ref{algo:SMO}.
Thus, we will only state replacements for the working set selection
step~\ref{SMO-step-select-ws} and the update step~\ref{SMO-step-solve}.

In the previous section it is left open how we can know the choice of
the working set in the forthcoming iteration. If we try to compute this
working set given a step size $\mu$, it turns out that we need to run
the working set selection algorithm. That is, the next working set
depends on the current step size and it takes linear time to determine
the working set for a given step size. This makes a search for the
optimal combination of step size and working set impractical.
We propose a very simple heuristic instead. For two reasons we suggest
to reuse the previous working set: First, the chance that the
corresponding kernel evaluations are cached is highest for this working
set.
Second, as already stated in section~\ref{sec:behavior}, the SMO
algorithm sometimes tends to oscillate within a small number of
variables. Figure~\ref{fig:TwoStep} gives a low-dimensional example.
Now, if we are in a phase of oscillation, the previous working set is a
very good candidate for planning-ahead.

These considerations result in a new algorithm. It differs from the SMO
algorithm only in step~\ref{SMO-step-solve}. The basic idea is to use
the previous working set for planning-ahead. However, we revert to the
standard SMO step if the previous step was used for planning-ahead or
the planned steps are not free. This proceeding is formalized in
Algorithm~\ref{algo:modified-step}, which is a replacement for
step~\ref{SMO-step-solve} of Algorithm~\ref{algo:SMO}.

\begin{algorithm}[H]
	\eIf{previous iteration performed a SMO step (eq.~\eqref{eq:clipped-newton})}
	{
		Compute the planning-ahead step size $\mu = \frac{Q_{22} w_1 - Q_{12} w_2}{\det(Q)}$ (eq.~\eqref{eq:planning-step})\\
		$\qquad$ assuming $B^{(t-1)}$ as the next working set\\
		\eIf{the current or the planned step ends at the box boundary}
		{
			perform a SMO step (eq.~\eqref{eq:clipped-newton})
		}
		{
			perform the step of size $\mu$ as planned
		}
	}
	{
		perform a SMO step (eq.~\eqref{eq:clipped-newton})
	}
	\caption{Modification of step \ref{SMO-step-solve} of the SMO Algorithm}
	\label{algo:modified-step}
\end{algorithm}

As already indicated in section~\ref{sec:planning-ahead} the algorithm
uses planning-ahead only if both steps involved do not hit the box
boundaries. This means that we need to check the box constraints while
planning ahead, and we turn to the standard SMO algorithm whenever the
precomputed steps become infeasible. Thus, there is no need to
incorporate the box constraints into the planning-ahead step size given
in equation~\eqref{eq:planning-step}.

The algorithm works well in experiments. However, it is hard to prove
its convergence to an optimal solution for a number of reasons.
The main difficulty involved is that we can not prove the strict
increase of the objective function for the planning-ahead step, even if
we additionally consider the subsequent iteration.
Therefore we additionally replace the working set selection
step~\ref{SMO-step-select-ws} of Algorithm~\ref{algo:SMO} with
Algorithm~\ref{algo:modified-selection}.

\begin{algorithm}[H]
	\KwIn{$\eta \in (0, 1)$}
	\KwIn{$\mu^{(t-1)}$: step size of the previous iteration $t-1$}
	\KwIn{$\mu^*$: Newton step size of the previous iteration $t-1$}

	\eIf{previous step resulted from planning-ahead}
	{
		\CommentSty{// standard selection, see equation \eqref{eq:wss}}\\
		$i^{(t)} \leftarrow \argmax\{\frac{\partial f}{\partial \alpha_n}(\alpha^{(t-1)}) \enspace|\enspace n \in \Iup(\alpha^{(t-1)})\}$\\
		$j^{(t)} \leftarrow \argmax\{\tilde g_{(i^{(t)}, n)}(\alpha^{(t-1)}) \enspace|\enspace n \in \Idown(\alpha^{(t-1)}) \setminus \{i\} \}$\\
		$B^{(t)} \leftarrow (i^{(t)}, j^{(t)})$
	}
	{
		\eIf{$1-\eta \leq \mu^{(t-1)} / \mu^* \leq 1+\eta$}
		{
			\CommentSty{// selection with additional candidate $B^{(t-2)}$}\\
			$i^{(t)} \leftarrow \argmax\{\frac{\partial f}{\partial \alpha_n}(\alpha^{(t-1)}) \enspace|\enspace n \in \Iup(\alpha^{(t-1)})\}$\\
			$j^{(t)} \leftarrow \argmax\{\tilde g_{(i^{(t)}, n)}(\alpha^{(t-1)}) \enspace|\enspace n \in \Idown(\alpha^{(t-1)}) \setminus \{i\} \}$\\
			$B^{(t)} \leftarrow (i^{(t)}, j^{(t)})$\\
			\If{$\tilde g_{B^{(t-2)}}(\alpha^{(t-1)}) > \tilde g_{B^{(t)}}(\alpha^{(t-1)})$}
			{
				$B^{(t)} \leftarrow B^{(t-2)}$
			}
		}
		{
			\CommentSty{// selection with additional candidate $B^{(t-2)}$}\\
			\CommentSty{// based on $g$ instead of $\tilde g$}\\
			$i^{(t)} \leftarrow \argmax\{\frac{\partial f}{\partial \alpha_n}(\alpha^{(t-1)}) \enspace|\enspace n \in \Iup(\alpha^{(t-1)})\}$\\
			$j^{(t)} \leftarrow \argmax\{g_{(i^{(t)}, n)}(\alpha^{(t-1)}) \enspace|\enspace n \in \Idown(\alpha^{(t-1)}) \setminus \{i\} \}$\\
			$B^{(t)} \leftarrow (i^{(t)}, j^{(t)})$\\
			\If{$g_{B^{(t-2)}}(\alpha^{(t-1)}) > g_{B^{(t)}}(\alpha^{(t-1)})$}
			{
				$B^{(t)} \leftarrow B^{(t-2)}$
			}
		}
	}
	\caption{Modification of step~\ref{SMO-step-select-ws} of the SMO Algorithm}
	\label{algo:modified-selection}
\end{algorithm}

At a first glance this algorithm looks more complicated than it is.
The selection basically ensures that the planning-ahead step and the
next SMO step together have a positive gain:
Recall that for $\mu^{(t-1)} / \mu^* \in [1-\eta, 1+\eta]$ the planning
step itself makes some progress, see Figure~\ref{fig:parabola1}. The
following SMO step has always positive gain. Now consider the case that
the planning-step does not make a guaranteed progress, that is,
$\mu^{(t-1)} / \mu^* \not\in [1-\eta, 1+\eta]$. The planned
double-step gain~\eqref{eq:gain2} is by construction lower bounded by
the Newton step gain. Thus, if the previous working set is reused in the
following iteration the total gain is positive. Now the usage of the SMO
gain $g$ instead of the Newton step gain $\tilde g$ for working set
selection ensures that this gain can only increase if another working
set is actually selected in the step following planning-ahead. Thus,
both steps together have positive gain in any case.
In the following we will arbitrarily fix $\eta = 0.9$. Thus, we will not
consider $\eta$ as a free hyper-parameter of
Algorithm~\ref{algo:modified-selection}.

It obviously makes sense to provide the working set which was used for
planning-ahead as a candidate to the working set selection algorithm.
As explained above, this property together with the usage of the SMO
gain function~$g$ instead of the approximation~$\tilde g$ ensures
positive gain of the double-step. Of course, positive gain is not
sufficient to show the convergence to an optimal point. The following
section is devoted to the convergence proof.

Although planning-ahead is done in constant time, it takes considerably
longer than the computation of the Newton step. For simple problems
where planning-ahead does not play a role because most steps end up at
the box bounds the unsuccessful planning steps can unnecessarily slow
down the algorithm. As discussed in section~\ref{sec:behavior} this is
mainly the case at the beginning of the optimization. We introduce the
following simple heuristic:
If the previous iteration was a free SMO step, then we perform planning
ahead, otherwise we perform another SMO step. Thus, we use the previous
SMO step as a predictor for the current one.
Algorithm~\ref{algo:predictor} captures this idea.

\begin{algorithm}[H]
	\eIf{previous iteration performed a free SMO step}
	{
		Compute the planning-ahead step size $\mu = \frac{Q_{22} w_1 - Q_{12} w_2}{\det(Q)}$ (eq.~\eqref{eq:planning-step})\\
		$\qquad$ assuming $B^{(t-1)}$ as the next working set\\
		\eIf{the current or the planned step ends at the box boundary}
		{
			perform a SMO step (eq.~\eqref{eq:clipped-newton})
		}
		{
			perform the step of size $\mu$ as planned
		}
	}
	{
		perform a SMO step (eq.~\eqref{eq:clipped-newton})
	}
	\caption{Modification of step \ref{SMO-step-solve} of the SMO Algorithm.
	The only difference compared to Algorithm~\ref{algo:modified-step}
	is the first condition that the SMO step must be {\em free}.}
	\label{algo:predictor}
\end{algorithm}

The SMO algorithm with modified steps \ref{SMO-step-select-ws} and
\ref{SMO-step-solve} as defined in Algorithms \ref{algo:predictor}
and \ref{algo:modified-selection}, respectively, will be referred to as
the planning-ahead SMO (PA-SMO) algorithm in the following. For
completeness, we state the complete PA-SMO algorithm at the end of the
paper.

It is not really clear whether the consideration of the working set
$B^{(t-1)}$ for planning ahead is a good choice. In fact, it would be
good to know whether this choice has a critical impact on the
performance. To evaluate this impact we need to introduce a variant of
PA-SMO. Because the planning-step takes only constant time we could
afford to perform $N > 1$ planning steps with the working sets
$B^{(t-n)}$ for $1 \leq n \leq N$ and choose the step size with the
largest double-step gain. In this case we should also provide these
sets to the working set selection algorithm as additional candidates.
We call this variant the multiple planning-ahead algorithm using the
$N > 1$ most recent working sets.

\section{Convergence of the Method} \label{sec:convergence}

In this section we will show that the PA-SMO algorithm converges to the
optimum $f^*$ of problem~\eqref{eq:QP}. First we introduce some notation
and make some definitions.

\begin{definition}
We say that a function  $s : \Feas \setminus \Opt \to \R$ has
property~$(*)$ if it is positive and lower semi-continuous. We extend
this definition to functions $s : \Feas \to \R$ which are positive and
lower semi-continuous on $\Feas \setminus \Opt$.
We say that a function $h$ has property~$(**)$ if there exists a
function $s$ with property~$(*)$ which is a lower bound for~$h$
on~$\Feas \setminus \Opt$.
\end{definition}

Recall two important properties of a lower semi-continuous function~$s$:
First, if $s(\alpha) > 0$ then there exists an open neighborhood $U$ of
$\alpha$ such that $s(\alpha) > 0$ for all $\alpha \in U$. Second, a
lower semi-continuous function attains its minimum on a (non-empty)
compact set.

\noindent The gap function
\begin{align*}
	\psi(\alpha) = \max \Big\{ v_B^T \nabla f(\alpha) \,\Big|\, B \in \B(\alpha) \Big\} \in \R
\end{align*}
will play a central role in the following. Note that this function is
used in the stopping condition in step~\ref{SMO-step-stop} of
Algorithm~\ref{algo:SMO} because it is positive on
$\Feas \setminus \Opt$ and zero or negative on~$\Opt$.

\begin{lemma}
The function $\psi$ has property~$(*)$.
\end{lemma}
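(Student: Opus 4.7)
The plan is to verify the two defining properties of $(*)$ separately: positivity on $\Feas \setminus \Opt$, and lower semi-continuity. The positivity follows from the fact that $\psi$ is, up to sign conventions, exactly the KKT-violation used in the stopping criterion; the lower semi-continuity hinges on the observation that although the index sets $\Iup(\alpha)$ and $\Idown(\alpha)$ change with $\alpha$, they can only grow when $\alpha$ is perturbed slightly.

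For positivity, I would argue by contraposition. Suppose $\alpha \in \Feas$ satisfies $\psi(\alpha) \leq 0$. Then $v_B^T \nabla f(\alpha) \leq 0$ for every $B = (i,j) \in \B(\alpha)$, i.e.\ for every $i \in \Iup(\alpha)$ and $j \in \Idown(\alpha)$ with $i \neq j$. This is equivalent to
\begin{align*}
\max \big\{ \nabla f(\alpha)_i \,\big|\, i \in \Iup(\alpha) \big\} \leq \min \big\{ \nabla f(\alpha)_j \,\big|\, j \in \Idown(\alpha) \big\},
\end{align*}
which is the standard KKT characterization for problem~\eqref{eq:QP}: picking any $\rho$ in the interval bounded by the two sides supplies a valid Lagrange multiplier for the equality constraint, and checking signs on active box constraints shows that all the remaining complementarity conditions hold. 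Since $f$ is concave (as $K$ is positive semi-definite), KKT is sufficient for optimality, so $\alpha \in \Opt$. Thus $\psi(\alpha) > 0$ whenever $\alpha \in \Feas \setminus \Opt$.

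For lower semi-continuity, the key point is that the conditions $\alpha_i < U_i$ and $\alpha_i > L_i$ defining $\Iup$ and $\Idown$ are open. Hence for any $\alpha_0 \in \Feas$ there is an open neighborhood $U$ of $\alpha_0$ (relative to $\Feas$) such that $\B(\alpha_0) \subseteq \B(\alpha)$ for all $\alpha \in U$. Consequently, on $U$ we have the pointwise bound
\begin{align*}
\psi(\alpha) = \max_{B \in \B(\alpha)} v_B^T \nabla f(\alpha) \,\geq\, \max_{B \in \B(\alpha_0)} v_B^T \nabla f(\alpha).
\end{align*}
The right-hand side is a finite maximum of affine functions of $\alpha$ (since $\nabla f(\alpha) = y - K\alpha$), hence continuous, and it equals $\psi(\alpha_0)$ at $\alpha = \alpha_0$. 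Therefore $\liminf_{\alpha \to \alpha_0} \psi(\alpha) \geq \psi(\alpha_0)$, establishing lower semi-continuity at every $\alpha_0 \in \Feas$ (in particular on $\Feas \setminus \Opt$).

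The only delicate point is the set-valued dependence of $\B(\alpha)$ on $\alpha$, and the rest of the proof is designed around it. Note that $\psi$ is generally not upper semi-continuous, because the admissible set of directions can suddenly enlarge at box-boundary points; but this one-sided jump always goes in the right direction for a $\liminf$ bound, which is precisely what property~$(*)$ requires.
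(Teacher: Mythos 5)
Your proof is correct, and it rests on the same key observation as the paper's argument, namely that the set $\B(\alpha)$ of admissible working sets can only grow under small perturbations of $\alpha$; however, the packaging is genuinely different and in two respects more self-contained. For lower semi-continuity, the paper partitions $\Feas$ into equivalence classes on which $\B(\cdot)$ is constant, notes that $\psi$ is continuous on each class, and argues that on the boundary of a class the maximum ranges over a smaller index set and hence ``can only drop down''; you instead exploit directly that the conditions $\alpha_i < U_i$ and $\alpha_i > L_i$ are open, obtain $\B(\alpha_0) \subseteq \B(\alpha)$ on a neighborhood of any $\alpha_0$, and bound $\psi$ from below by the continuous function $\alpha \mapsto \max\{v_B^T \nabla f(\alpha) \mid B \in \B(\alpha_0)\}$ (a finite maximum of affine functions, since $\nabla f(\alpha) = y - K\alpha$), which gives the $\liminf$ inequality at $\alpha_0$ without any class decomposition and with fewer topological claims left implicit. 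For positivity, the paper simply cites that $\psi > 0$ on $\Feas \setminus \Opt$ as well known, whereas you supply the KKT argument via contraposition, using concavity of $f$ (positive semi-definiteness of $K$) to get sufficiency of the KKT conditions; one small check worth making explicit there is that excluding the diagonal pairs $(n,n)$ from $\B(\alpha)$ does not break the equivalence between $\psi(\alpha) \leq 0$ and $\max\{\nabla f(\alpha)_i \mid i \in \Iup(\alpha)\} \leq \min\{\nabla f(\alpha)_j \mid j \in \Idown(\alpha)\}$ — in the corner case where the unique maximizer over $\Iup(\alpha)$ coincides with the unique minimizer over $\Idown(\alpha)$ the two sides are equal, so the implication still holds. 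In short: the paper's route is a coarse geometric decomposition with positivity outsourced to the literature; yours is a more elementary, fully spelled-out argument proving both halves of property~$(*)$ directly.
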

\begin{proof}
On $\Feas$ we introduce the equivalence relation
$\alpha_1 \sim \alpha_2 \Leftrightarrow \B(\alpha_1) = \B(\alpha_2)$
and split the feasible region into equivalence classes, denoted by
$[\alpha]$.
Obviously, $\psi$ is continuous on each equivalence class $[\alpha]$.
Now, the topological boundary $\partial [\alpha]$ of a class $[\alpha]$
is the union of those classes $[\alpha']$ with
$\B([\alpha']) \subset \B([\alpha])$.
Because the argument of the maximum operation in the definition of
$\psi$ is a subset of $\B([\alpha])$ on the boundary the maximum can
only drop down. Thus, $\psi$ is lower semi-continuous. Further, it is
well known that $\psi$ is positive for non-optimal points.
\end{proof}

\noindent We collect all possible working sets in
\begin{align*}
	\B = \Big\{ (i, j) \enspace\Big|\enspace i, j \in \{1, \dots, \ell\} \text{ and } i \not= j \Big\}
\end{align*}
and write the working set selection~\eqref{eq:wss} as a map
$W : \Feas \to \B$. With this fixed working set selection we
consider the Newton step gain
\begin{align*}
	\tilde g_W : \Feas \to \GI, \qquad \tilde g_W(\alpha) = \tilde g_{W(\alpha)}(\alpha)
\end{align*}
as a function of~$\alpha$, in contrast to the family of functions with
variable working set defined in eq.~\eqref{eq:newton-gain}.

\begin{lemma} \label{lemma:bound}
There exists $\sigma > 0$ such that the function
$\varphi(\alpha) = \sigma \, (\psi(\alpha))^2$ is a lower bound
for~$\tilde g_W$ on $\Feas \setminus \Opt$. Thus, the Newton step gain
$\tilde g_W$ has property~$(**)$.
\end{lemma}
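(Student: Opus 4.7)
The plan is to exhibit a pointwise competitor working set for which $\tilde g$ admits a clean lower bound in terms of $\psi$, and then to invoke the $\argmax$-step of $W$ to transfer the bound to $\tilde g_W$. Recall that $W(\alpha)=(i^*,j^*)$ with $i^*=\argmax\{\partial f/\partial\alpha_n(\alpha)\mid n\in\Iup(\alpha)\}$ and $j^*=\argmax\{\tilde g_{(i^*,n)}(\alpha)\mid n\in\Idown(\alpha)\setminus\{i^*\}\}$. The natural competitor is the classical maximal violating pair $(i^*,\hat\jmath)$, where $\hat\jmath=\argmin\{\partial f/\partial\alpha_n(\alpha)\mid n\in\Idown(\alpha)\}$. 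On $\Feas\setminus\Opt$ the gap $\psi(\alpha)$ is strictly positive, which forces $i^*\neq\hat\jmath$, so $(i^*,\hat\jmath)\in\B(\alpha)$; the defining property of $j^*$ then gives $\tilde g_W(\alpha)\geq\tilde g_{(i^*,\hat\jmath)}(\alpha)$.

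I would next evaluate this lower bound. By the choice of $i^*$ and $\hat\jmath$, the directional derivative along $v_{(i^*,\hat\jmath)}$ equals exactly the gap, so
\[
    \tilde g_{(i^*,\hat\jmath)}(\alpha)\ =\ \frac{1}{2}\,\frac{\psi(\alpha)^2}{K_{i^*i^*}-2K_{i^*\hat\jmath}+K_{\hat\jmath\hat\jmath}}.
\]
The denominator can be bounded uniformly by applying positive semi-definiteness of $K$ in the form of Cauchy--Schwarz, $|K_{ab}|\leq\sqrt{K_{aa}K_{bb}}$, which yields
\[
    K_{aa}-2K_{ab}+K_{bb}\ \leq\ \bigl(\sqrt{K_{aa}}+\sqrt{K_{bb}}\bigr)^2\ \leq\ 4M
\]
for every pair $a\neq b$, with $M:=\max\bigl(1,\max_n K_{nn}\bigr)$. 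Setting $\sigma:=1/(8M)$ then gives $\tilde g_W(\alpha)\geq\sigma\,\psi(\alpha)^2=:\varphi(\alpha)$ on all of $\Feas\setminus\Opt$.

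Finally I would verify that $\varphi=\sigma\psi^2$ has property~$(*)$, so that it qualifies as the witness required for property~$(**)$ of $\tilde g_W$: positivity on $\Feas\setminus\Opt$ is inherited from $\psi>0$ there, and lower semi-continuity of $\varphi$ follows from lower semi-continuity of the non-negative function $\psi$ (established in the previous lemma) together with the continuity and monotonicity of $x\mapsto x^2$ on $[0,\infty)$.

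The main obstacle is the degenerate case where the denominator $v_{(i^*,\hat\jmath)}^T K v_{(i^*,\hat\jmath)}$ vanishes (which already happens when $K=0$). Here the paper's convention $\tilde g_B\in\GI$ assigns the value $+\infty$ as soon as the numerator $v_B^T\nabla f(\alpha)$ is nonzero, and positivity of $\psi$ on $\Feas\setminus\Opt$ guarantees this nonvanishing; hence $\tilde g_W(\alpha)\geq\sigma\psi(\alpha)^2$ holds trivially. Including the additive $1$ inside the definition of $M$ above ensures that a single constant $\sigma>0$ handles both the regular and the degenerate cases simultaneously.
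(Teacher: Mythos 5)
Your proof is correct, and it takes a genuinely different route from the paper's. The paper splits $\Feas \setminus \Opt$ according to whether the selected direction $v_{W(\alpha)}$ lies in $\ker(K)$, and on the non-degenerate part it imports an external result from section~3 of Fan et al.\ \cite{fan:2005}, namely $\psi_W(\alpha) \geq \sqrt{\sigma_2/\sigma_1}\,\psi(\alpha)$, which lower-bounds the KKT violation of the second-order pair by the maximal violation; combining this with the uniform curvature bound $v_B^T K v_B \leq \sigma_1$ yields $\sigma = \sigma_2/(2\sigma_1^2)$, where $\sigma_2$ is the smallest \emph{positive} curvature over all pairs. You bypass the imported violation-comparison entirely by exploiting the structure of the selection \eqref{eq:wss}: since $W(\alpha)$ shares its first index $i^*$ with the maximal violating pair $(i^*,\hat\jmath)$ and the second index maximizes $\tilde g_{(i^*,\cdot)}$ over $\Idown(\alpha)\setminus\{i^*\}$, you compare \emph{gains} directly, $\tilde g_W(\alpha) \geq \tilde g_{(i^*,\hat\jmath)}(\alpha) = \tfrac{1}{2}\psi(\alpha)^2 / (v_{(i^*,\hat\jmath)}^T K v_{(i^*,\hat\jmath)})$, and bound the curvature from above by $4M$ via Cauchy--Schwarz on the positive semi-definite $K$. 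Your auxiliary checks are all sound: $\psi(\alpha)>0$ does force $i^* \neq \hat\jmath$ (any admissible violating pair $(i,j)$ gives $\partial f/\partial\alpha_{i^*} \geq \partial f/\partial\alpha_i > \partial f/\partial\alpha_j \geq \partial f/\partial\alpha_{\hat\jmath}$), so the MVP is an admissible candidate whose gap equals $\psi(\alpha)$ exactly; and your degenerate case, where the \emph{competitor's} denominator vanishes, correctly gives $\tilde g_{(i^*,\hat\jmath)}(\alpha) = \infty$ in $\GI$ because the numerator is $\psi(\alpha)>0$, so the argmax comparison still transfers the bound. Note that your case split is cleaner than the paper's: you never need to ask whether $v_{W(\alpha)}$ itself is in $\ker(K)$ (the paper's set $N$), since the comparison $\tilde g_W \geq \tilde g_{(i^*,\hat\jmath)}$ is insensitive to whether $\tilde g_W$ is finite. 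What your approach buys is a self-contained, elementary proof with the simpler constant $\sigma = 1/(8M)$, $M = \max(1, \max_n K_{nn})$, avoiding both the citation and the constant $\sigma_2$; what the paper's route buys is a bound expressed through $\psi_W$, the violation of the selected pair itself, which is the quantity controlled in \cite{fan:2005} and keeps the argument aligned with that convergence framework. Both establish the lemma; your constant and the paper's are in general incomparable, but either suffices since only positivity of $\sigma$ matters.
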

\begin{proof}
Of course, $\varphi$ inherits property~$(*)$ from~$\psi$.
We split $\Feas \setminus \Opt = M \cup N$ into disjoint subsets
\begin{align*}
	M & = \Big\{ \alpha \in \Feas \setminus \Opt \enspace\Big|\enspace v_{W(\alpha)} \not\in \ker(K) \Big\} \\
	N & = \Big\{ \alpha \in \Feas \setminus \Opt \enspace\Big|\enspace v_{W(\alpha)} \in \ker(K) \Big\}
\end{align*}
and introduce the constants
\begin{align*}
	\sigma_1 = \max \Big\{ v_B^T K v_B &\enspace\Big|\enspace B \in \B \Big\} \\
	\sigma_2 = \min \Big\{ v_B^T K v_B &\enspace\Big|\enspace B \in \B \text{ with } v_B^T K v_B > 0 \Big\}
	\enspace.
\end{align*}

On $M$ we have $v_{W(\alpha)}^T K v_{W(\alpha)} > 0$.
We proceed in two steps.
First we define the gap of the working set $W(\alpha)$
\begin{align*}
	\psi_W : \Feas \to \R, \qquad \alpha \mapsto v_{W(\alpha)}^T \nabla f(\alpha)
	\enspace.
\end{align*}
Then we make use of a result from \cite{fan:2005}, where the bound
$\psi_W(\alpha) \geq \sqrt{\sigma_2 / \sigma_1} \, \psi(\alpha)$
is derived in section~3.
From the definition of $\tilde g_B(\alpha)$ applied to the working set
$B(\alpha)$ we get the inequality
\begin{align*}
	\tilde g_W(\alpha) = \frac{1}{2} \frac{(v_{W(\alpha)}^T \nabla f(\alpha))^2}{v_{W(\alpha)}^T K v_{W(\alpha)}} \geq \frac{1}{2 \, \sigma_1} \, (\psi_W(\alpha))^2
\end{align*}
resulting in the desired lower bound with
\begin{align*}
	\sigma = \frac{1}{2 \, \sigma_1} \left( \sqrt{\frac{\sigma_2}{\sigma_1}} \right)^2 = \frac{\sigma_2}{2 \, \sigma_1^2} > 0
	\enspace.
\end{align*}

On $N$ the situation is much simpler. For
$v_{W(\alpha)}^T \nabla f(\alpha) = 0$ we can not make any progress on
the working set $W(\alpha)$ which contradicts $\alpha \not\in \Opt$.
Thus we have $v_{W(\alpha)}^T \nabla f(\alpha) \not= 0$ which implies
$\tilde g_W(\alpha) = \infty$. This is because the quadratic term of the
objective function in direction $v_{W(\alpha)}$ vanishes and the
function increases linearly. Of course we then have
$\tilde g_W(\alpha) \geq \sigma (\psi(\alpha))^2$ for $\alpha \in N$.
\end{proof}

In contrast to~\cite{fan:2005} there is no need to use an artificial
lower bound $\tau > 0$ for vanishing quadratic terms in this proof.
With the properties of $\varphi$ at hand it is straight forward to
prove the following theorem:

\begin{theorem} \label{theorem:convergence}
Consider the sequence $(\alpha^{(t)})_{t \in \N}$ in $\Feas$ with
$(f(\alpha^{(t)})_{t \in \N})$ monotonically increasing.
Let there exist a constant $c > 0$ and an infinite set $T_c \subset \N$
such that the steps from $\alpha^{(t-1)}$ to $\alpha^{(t)}$ have the
property
\begin{align*}
	f(\alpha^{(t)}) - f(\alpha^{(t-1)}) \geq c \cdot \tilde g_W(\alpha^{(t-1)})
\end{align*}
for all $t \in T_c$.
Then we have $\lim\limits_{t \to \infty} f(\alpha^{(t)}) = f^*$.
\end{theorem}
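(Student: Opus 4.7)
The approach is to argue by contradiction. Since $f(\alpha^{(t)})$ is monotone non-decreasing and bounded above by $f^*$, the limit $f^{**} := \lim_{t \to \infty} f(\alpha^{(t)})$ exists in $[f(\alpha^{(0)}), f^*]$. Assume for contradiction that $f^{**} < f^*$; the plan is to derive a contradiction from Lemma~\ref{lemma:bound} together with the lower semi-continuity of $\psi$ established in Lemma~1.

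First, I observe that under this hypothesis every iterate and every accumulation point of the sequence lies in $\Feas \setminus \Opt$: the iterates because $f(\alpha^{(t)}) \leq f^{**} < f^*$, and any accumulation point because $f$ is continuous and attains the value $f^{**}$ there. This places us in the regime where Lemma~\ref{lemma:bound} and the positivity and lower semi-continuity of $\psi$ apply.

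Second, a telescoping argument. The total gain $\sum_{t=1}^{\infty}(f(\alpha^{(t)}) - f(\alpha^{(t-1)})) = f^{**} - f(\alpha^{(0)})$ is finite and its terms are non-negative, so the subseries indexed by the infinite set $T_c$ is finite as well. Combined with the hypothesized lower bound $f(\alpha^{(t)}) - f(\alpha^{(t-1)}) \geq c \cdot \tilde g_W(\alpha^{(t-1)})$ for $t \in T_c$, this forces $\tilde g_W(\alpha^{(t-1)}) \to 0$ along $T_c$ (the value $\infty$ cannot occur, since a single infinite term would already exceed the finite total gain). Lemma~\ref{lemma:bound} then yields $\psi(\alpha^{(t-1)}) \to 0$ along $T_c$.

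Finally, I would use compactness of $\Feas$ (a bounded polytope in $\R^{\ell}$) to extract a convergent subsequence $\alpha^{(t_k - 1)} \to \bar\alpha$ with $t_k \in T_c$. By the first step $\bar\alpha \in \Feas \setminus \Opt$, so lower semi-continuity and positivity of $\psi$ at $\bar\alpha$ give $0 < \psi(\bar\alpha) \leq \liminf_{k \to \infty} \psi(\alpha^{(t_k-1)}) = 0$, a contradiction. Hence $f^{**} = f^*$. The only real subtlety is making sure the accumulation point actually lies in $\Feas \setminus \Opt$, so that $\psi$ is both strictly positive and lower semi-continuous there; this is precisely where the contradiction hypothesis $f^{**} < f^*$ and the continuity of $f$ are used, and the remainder is routine.
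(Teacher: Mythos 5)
Your proof is correct, and it uses the same core ingredients as the paper's proof --- compactness of $\Feas$, the bound $\tilde g_W \geq \sigma\,\psi^2$ from Lemma~\ref{lemma:bound}, and positivity plus lower semi-continuity of $\psi$ on $\Feas \setminus \Opt$ --- but it runs the contradiction in the opposite direction, so the two arguments are worth contrasting. The paper extracts a convergent subsequence $(\alpha^{(t-1)})_{t \in \tilde T}$, $\tilde T \subset T_c$, assumes its limit $\alpha^{(\infty)}$ is \emph{not optimal}, uses lower semi-continuity to obtain a uniform bound $\varphi \geq m > 0$ on a neighborhood eventually containing these iterates, and then sums infinitely many gains of size at least $c\,m$ to force $f(\alpha^{(\infty)}) = \infty > f^*$. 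You instead assume the limit value $f^{**}$ of the monotone sequence satisfies $f^{**} < f^*$, deduce from summability of the non-negative telescoping series that the gains, hence $\tilde g_W(\alpha^{(t-1)})$, hence $\psi(\alpha^{(t-1)})$, tend to $0$ along $T_c$, and then contradict $\psi(\bar\alpha) > 0$ at an accumulation point $\bar\alpha$ via the sequential form of lower semi-continuity. These are contrapositive rearrangements of one argument, but yours buys two small things: it avoids the nested-neighborhood construction ($U$ with $\overline{U} \subset U'$ and the attained minimum $m$), and it makes explicit a step the paper leaves implicit --- the passage from ``some accumulation point is optimal'' to $\lim_{t\to\infty} f(\alpha^{(t)}) = f^*$, which in your framing is automatic because the contradiction hypothesis $f^{**} < f^*$ together with continuity of $f$ and monotonicity places every iterate and every accumulation point in $\Feas \setminus \Opt$, exactly where Lemma~\ref{lemma:bound} and property~$(*)$ apply. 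Your side remark that $\tilde g_W = \infty$ cannot occur for $t \in T_c$ (a single infinite term would exceed the finite total gain) correctly handles the degenerate case where $v_{W(\alpha)} \in \ker(K)$.
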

\begin{proof}
Because of the compactness of $\Feas$ there exists a convergent
sub-sequence $(\alpha^{(t-1)})_{t \in \tilde T}$ for some
$\tilde T \subset T_c$. We will denote its limit point by
$\alpha^{(\infty)}$. Assume the limit point is not optimal. Then
$\varphi(\alpha^{(\infty)}) > 0$ by property~$(*)$. The
lower semi-continuity of $\varphi$ implies $\varphi(\alpha) > 0$ for all
$\alpha$ in an open neighborhood $U'$ of $\alpha^{(\infty)}$. We choose
a smaller open neighborhood $U$ of $\alpha^{(\infty)}$ such that its
closure $\overline{U}$ is contained in $U'$. Again by lower
semi-continuity $\varphi$ attains its minimum $m > 0$ on $\overline{U}$.
There is $t_0$ such that $\alpha^{(t)} \in U$ for all $t \in \tilde T$
with $t > t_0$. Then we have
\begin{align*}
	f(\alpha^{(\infty)}) \enspace \geq & \enspace f(\alpha^{(t_0)}) + \sum_{t \in \tilde T, t > t_0} f(\alpha^{(t)}) - f(\alpha^{(t-1)}) \\
			\enspace \geq & \enspace f(\alpha^{(t_0)}) + \sum_{t \in \tilde T, t > t_0} c \cdot \tilde g(\alpha^{(t-1)}) \\
			\enspace \geq & \enspace f(\alpha^{(t_0)}) + \sum_{t \in \tilde T, t > t_0} c \cdot \varphi(\alpha^{(t-1)}) \\
			\enspace \geq & \enspace f(\alpha^{(t_0)}) + \sum_{t \in \tilde T, t > t_0} c \cdot m = \infty > f^*
\end{align*}
which is a contradiction. Thus, $\alpha^{(\infty)}$ is optimal.
\end{proof}

With the additional assumption that infinitely many SMO steps end up
free we can use Theorem~\ref{theorem:convergence} to show the
convergence of Algorithm~\ref{algo:SMO} to an optimal solution.
This was already proven in~\cite{fan:2005} without this assumption.

\begin{corollary} \label{corollary:convergence}
Consider the sequence $(\alpha^{(t)})_{t \in \N}$ in $\Feas$ with
$(f(\alpha^{(t)}))_{t \in \N}$ monotonically increasing.
If there are infinitely many $t$ such that the step from
$\alpha^{(t-1)}$ to $\alpha^{(t)}$ is a free SMO step with working
set~\eqref{eq:wss} then we have
$\lim\limits_{t \to \infty} f(\alpha^{(t)}) = f^*$.
\end{corollary}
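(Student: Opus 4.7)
My plan is to derive the corollary as an essentially immediate consequence of Theorem~\ref{theorem:convergence}. The hypotheses of the corollary already give monotonically increasing $(f(\alpha^{(t)}))$ and a distinguished infinite index set, so it suffices to exhibit a constant $c > 0$ for which the per-step gain lower bound in the theorem holds on that index set.

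First I would let $T_c \subset \N$ be the set of iteration indices $t$ at which the step from $\alpha^{(t-1)}$ to $\alpha^{(t)}$ is a free SMO step using the working set selection~\eqref{eq:wss}; by hypothesis, $T_c$ is infinite. For such a $t$, the working set is $B^{(t)} = W(\alpha^{(t-1)})$ and the step size realizes the unconstrained Newton step $\mu^* = l_t/Q_{tt}$ in direction $v_{B^{(t)}}$. By the identity~\eqref{eq:newton-gain} applied to this working set, the exact gain of the step equals
\begin{align*}
    f(\alpha^{(t)}) - f(\alpha^{(t-1)}) = \tilde g_{B^{(t)}}(\alpha^{(t-1)}) = \tilde g_W(\alpha^{(t-1)}).
\end{align*}
Thus the hypothesis of Theorem~\ref{theorem:convergence} holds on $T_c$ with $c = 1$.

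I would then cite Theorem~\ref{theorem:convergence} to conclude $\lim_{t \to \infty} f(\alpha^{(t)}) = f^*$. The only subtlety worth a brief comment is the degenerate case $v_{W(\alpha^{(t-1)})} \in \ker(K)$, where $\tilde g_W(\alpha^{(t-1)}) = \infty$; in that case a ``free'' Newton step in direction $v_{W(\alpha^{(t-1)})}$ is not well defined, so such iterations are automatically excluded from $T_c$ (they cannot be free SMO steps in the sense of equation~\eqref{eq:clipped-newton}), and the equality above is used only when $Q_{tt} > 0$. Consequently there is no real obstacle; the argument amounts to matching the free-step gain to the bound required by Theorem~\ref{theorem:convergence} and invoking that theorem.
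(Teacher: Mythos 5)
Your proposal is correct and matches the paper's own proof, which likewise observes that a free SMO step satisfies $f(\alpha^{(t)}) - f(\alpha^{(t-1)}) = \tilde g_W(\alpha^{(t-1)})$ and applies Theorem~\ref{theorem:convergence} with $c = 1$. Your extra remark on the degenerate case $v_{W(\alpha^{(t-1)})} \in \ker(K)$ is a sound clarification the paper leaves implicit, but it does not change the argument.
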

\begin{proof}
For a free SMO step we have
$f(\alpha^{(t)}) - f(\alpha^{(t-1)}) = \tilde g(\alpha^{(t-1)})$.
Thus we can simply apply the above theorem with $c = 1$.
\end{proof}

The allurement of this approach is that we do not need any assumption on
the steps which differ from free SMO steps as long as the objective
function does not decrease. This is an ideal prerequisite to tackle the
convergence of hybrid algorithms which need to distinguish qualitatively
different branches, like for example the PA-SMO algorithm.
Consequently, the following lemma will be helpful when applying the
above results to PA-SMO.

\begin{lemma} \label{lemma:gain-rate}
Consider two iterations $t$ and $t+1$ of the PA-SMO algorithm where
planning-ahead is active in iteration $t$. The double-step gain
$g^{\text{2-step}} = f(\alpha^{(t+1)}) - f(\alpha^{(t-1)})$ is then
lower bounded by $(1 - \eta^2) \cdot \tilde g_W(\alpha^{(t-1)})$.
\end{lemma}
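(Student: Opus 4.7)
The plan is to split on whether the planning-ahead step size lies close to the Newton step size. Write $\mu^{(t)}$ for the planning-ahead step size used in iteration $t$, $\mu^* = w_1/Q_{11}$ for the unconstrained Newton step size of the direction $v_{B^{(t)}}$ at $\alpha^{(t-1)}$, and $\tilde g^* := \tilde g_W(\alpha^{(t-1)}) = \frac{1}{2} Q_{11} (\mu^*)^2$. Because planning-ahead is active in iteration $t$, the predictor rule of Algorithm~\ref{algo:predictor} guarantees that neither the planning-ahead step itself nor the precomputed second step in direction $v_{B^{(t-1)}}$ hits the box, and that iteration $t+1$ is executed as a plain SMO step (so it has non-negative gain).

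The first case is $\mu^{(t)}/\mu^* \in [1-\eta, 1+\eta]$. Here one evaluates the single-variable parabola from Figure~\ref{fig:parabola1}, $\tilde g(\mu) = \big(2(\mu/\mu^*) - (\mu/\mu^*)^2\big)\,\tilde g^*$, and observes that its minimum on $[1-\eta,1+\eta]$ is $(1-\eta^2)\tilde g^*$, attained at the endpoints. Thus the planning-ahead step alone contributes at least $(1-\eta^2)\tilde g^*$, and the non-negative contribution from iteration $t+1$ closes the bound.

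The second case is $\mu^{(t)}/\mu^* \notin [1-\eta, 1+\eta]$, where the first step by itself may decrease $f$, and we must rely on iteration $t+1$ to recover. Here the modified working set selection of Algorithm~\ref{algo:modified-selection} is used with the true SMO gain $g$ instead of its Newton upper bound $\tilde g$, and with the planning-ahead reference $B^{(t-1)}$ provided as an additional candidate; hence the selected $B^{(t+1)}$ satisfies $g_{B^{(t+1)}}(\alpha^{(t)}) \geq g_{B^{(t-1)}}(\alpha^{(t)})$. The "no-box" guard ensures that the SMO step on $B^{(t-1)}$ at $\alpha^{(t)}$ coincides with the unclipped Newton step used to derive $\mu^{(t)}$, so
\[ f(\alpha^{(t+1)}) - f(\alpha^{(t-1)}) \;\geq\; \big[f(\alpha^{(t)}) - f(\alpha^{(t-1)})\big] + g_{B^{(t-1)}}(\alpha^{(t)}) \;=\; g^{\text{2-step}}(\mu^{(t)}), \]
with $g^{\text{2-step}}$ as in eq.~\eqref{eq:gain2}. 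Since $\mu^{(t)}$ is defined by eq.~\eqref{eq:planning-step} as the maximizer of this quadratic, and plugging $\mu = w_1/Q_{11}$ into eq.~\eqref{eq:gain1} gives $\tilde g^* + \frac{1}{2} Q_{22}(\mu^{(2)})^2 \geq \tilde g^*$, the maximum value dominates $\tilde g^* \geq (1-\eta^2)\tilde g^*$.

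The main obstacle is the bookkeeping in Case~2: one has to argue in sequence that (i) the Newton step on $B^{(t-1)}$ used during planning-ahead is actually the realized SMO step at $\alpha^{(t)}$ (from the box guard), (ii) the switch from $\tilde g$ to $g$ in the selector prevents a replacement $B^{(t+1)}$ that merely looks better under the Newton upper bound, and (iii) the planning-ahead maximum of the two-step parabola dominates the single-step Newton gain $\tilde g^*$. Cases~1 and 3 reduce to straightforward one-variable calculations on the respective parabolas.
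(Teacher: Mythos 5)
Your proof is correct and takes essentially the same route as the paper's: the identical case split on whether $\mu^{(t)}/\mu^* \in [1-\eta,\,1+\eta]$, the same parabola bound $\left(2\frac{\mu^{(t)}}{\mu^*} - \big(\frac{\mu^{(t)}}{\mu^*}\big)^2\right)\tilde g^* \geq (1-\eta^2)\,\tilde g^*$ plus a non-negative SMO step in the first case, and in the second case the same observation that the planned double-step gain~\eqref{eq:gain2} is maximized at~\eqref{eq:planning-step} and hence dominates its value $\geq \tilde g^*$ at the Newton step, while the $g$-based selection with candidate $B^{(t-1)}$ can only improve the realized second step. You merely make explicit some bookkeeping the paper leaves implicit (the box guard ensuring the planned Newton step on $B^{(t-1)}$ is exactly realizable); the stray mention of a ``Case~3'' in your closing paragraph is a harmless slip, as there are only two cases.
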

\begin{proof}
Let $\mu^*$ denote the Newton step size in iteration $t$ and let
$\tilde g^* = \tilde g_W(\alpha^{(t-1)})$ be the gain achieved by this
(possibly infeasible) step. Just like in
Algorithm~\ref{algo:modified-selection} we distinguish two cases:
\begin{enumerate}
\item \label{case:1}
	The step size $\mu^{(t)}$ satisfies $1-\eta \leq \mu^{(t)} / \mu^* \leq 1+\eta$:\\
	We write the gain in iteration $t$ in the form
	$\big( 2 \mu^{(t)}/\mu^* - (\mu^{(t)}/\mu^*)^2 \big) \cdot \tilde g^*$,
	see Figure~\ref{fig:parabola1}. Together with the
	strict increase of the objective function in iteration $t+1$ we get
	$g^{\text{2-step}} \geq (1-\eta^2) \cdot \tilde g^*$.
\item
	The step size $\mu^{(t)}$ satisfies $\mu^{(t)} / \mu^* \not\in [1-\eta, 1+\eta]$:\\
	By construction the planned ahead gain~\eqref{eq:gain2} is lower
	bounded by $\tilde g^*$ (see Section~\ref{sec:planning-ahead}).
	The planning-step assumes that the working set $B^{(t-1)}$ is
	selected in iteration $t+1$. However, another working set may
	actually be chosen. Because Algorithm~\ref{algo:modified-selection}
	uses the SMO gain $g_B(\alpha^{(t)})$ for working set selection
	the gain may only improve due to the choice of
	$B^{(t+1)} \not= B^{(t-1)}$. Therefore $g^{\text{2-step}}$ is even
	lower bounded by $\tilde g^*$. With $1 - \eta^2 \leq 1$ the desired
	bound follows.
\end{enumerate}
\end{proof}

The first case seems to complicate things unnecessarily. Further, it
reduces the progress by a factor of $1 - \eta^2$. We could simply skip
the second if-condition in Algorithm~\ref{algo:modified-selection} and
in all cases turn to the else-part. From a purely mathematical point of
view this is clearly true. However, the usage of the exact gain
$g_B(\alpha)$ instead of $\tilde g_B(\alpha)$ is an unfavorable choice
for working set selection in practice. For performance reasons we want
to allow the algorithm to use the working set selection objective
$\tilde g_B(\alpha)$ as often as possible. Thus we have to cover
case~\ref{case:1} in Lemma~\ref{lemma:gain-rate},~too.

\begin{theorem}
Let $\alpha^{(t)}$ denote the sequence of feasible points produced by
the PA-SMO algorithm starting from $\alpha^{(0)}$ and working at perfect
accuracy $\varepsilon = 0$.
Then the algorithm either stops in finite time with an optimal solution
or produces an infinite sequence with
$\lim \limits_{t\to\infty}f(\alpha^{(t)}) = f^*$.
\end{theorem}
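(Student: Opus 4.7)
The plan is to split into the finite-termination case and the infinite-sequence case. If the algorithm halts at some iteration $t$, the stopping condition in step~\ref{SMO-step-stop} with $\varepsilon = 0$ reads $\psi(\alpha^{(t)}) \leq 0$; combined with the positivity of $\psi$ on $\Feas \setminus \Opt$ (already established when we showed that $\psi$ has property~$(*)$) this forces $\alpha^{(t)} \in \Opt$, hence $f(\alpha^{(t)}) = f^*$.

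The substantive part is the infinite-sequence case. I plan to apply Theorem~\ref{theorem:convergence}, but only after reorganizing the iterates. The raw sequence $(\alpha^{(t)})$ is not monotone in $f$ because an isolated planning-ahead step may decrease the objective. I would therefore pass to a collapsed sequence $(\beta^{(s)})_{s \in \N}$ obtained by fusing every planning-ahead iteration with its immediately following iteration into one super-step and leaving every other iteration as its own super-step. This is well-defined: after a planning-ahead step the test in Algorithm~\ref{algo:predictor} fails, so the next iteration must be a standard SMO step. By Lemma~\ref{lemma:gain-rate}, every fused super-step contributes a gain of at least $(1-\eta^2)\tilde g_W(\beta^{(s-1)}) \geq 0$, and an ordinary SMO super-step has non-negative gain by construction, so $(f(\beta^{(s)}))$ is monotonically increasing. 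Since each super-step absorbs at most one extra iteration, $\lim_s f(\beta^{(s)}) = \lim_t f(\alpha^{(t)})$, and it suffices to prove $f(\beta^{(s)}) \to f^*$.

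I would then distinguish two sub-cases. Case~(A): planning-ahead fires infinitely often. Then Lemma~\ref{lemma:gain-rate} supplies the hypothesis of Theorem~\ref{theorem:convergence} directly with $c = 1-\eta^2$ and with $T_c$ the infinite set of indices of planning-ahead super-steps, so $f(\beta^{(s)}) \to f^*$. Case~(B): planning-ahead fires only finitely often, so past some $s_0$ every super-step is a single SMO iteration governed by the ``else'' branches of Algorithm~\ref{algo:modified-selection}. After any free SMO step the second branch is activated, and by construction the chosen working set satisfies $\tilde g_{B^{(t)}} \geq \tilde g_W(\alpha^{(t-1)})$; so whenever the subsequent SMO step is also free, its gain is at least $\tilde g_W(\alpha^{(t-1)})$ and Theorem~\ref{theorem:convergence} applies with $c = 1$. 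I would then argue that such back-to-back free steps must occur infinitely often, or else the algorithm eventually performs only bounded-clipped steps on the finitely many boundary faces of $\Feas$, in which case a separate compactness argument reusing Lemma~\ref{lemma:bound} supplies the infinite sub-sequence required by Theorem~\ref{theorem:convergence}.

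The hard part is Case~(B), specifically when the third branch of Algorithm~\ref{algo:modified-selection} is active. There the selection uses the true SMO gain $g$ in place of the Newton proxy $\tilde g$, so the chosen $B^{(t)}$ satisfies $g_{B^{(t)}} \geq g_{(i^{(t)}, j_W)}(\alpha^{(t-1)})$ but $\tilde g_{B^{(t)}}$ is not directly bounded below by any fraction of $\tilde g_W$. Lifting this to a uniform bound---perhaps by showing that $g_{B^{(t)}}$ itself dominates $\sigma(\psi(\alpha^{(t-1)}))^2$ in the spirit of Lemma~\ref{lemma:bound} and then running a variant of Theorem~\ref{theorem:convergence} with $\varphi$ in place of $\tilde g_W$---is where I expect the delicate case analysis to live, and it is the part I would write last.
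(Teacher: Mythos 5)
Your finite-termination case and your Case~(A) are essentially the paper's proof: the paper likewise handles the infinitely-many-planning-steps case by feeding Lemma~\ref{lemma:gain-rate} with $c = 1-\eta^2$ into Theorem~\ref{theorem:convergence} (and notes, as an alternative, that Algorithm~\ref{algo:predictor} forces the step \emph{preceding} each planning step to be free, so Corollary~\ref{corollary:convergence} applies). Your explicit collapse into super-steps is in fact a cleaner formalization than the paper gives: Theorem~\ref{theorem:convergence} demands a monotone sequence, a single planning step can decrease $f$, and the paper applies the theorem without spelling out this fusing. So far, so good.

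Case~(B) is where you have a genuine gap, and you half-admit it yourself. The paper does \emph{not} try to prove convergence of the pure-SMO tail from Theorem~\ref{theorem:convergence}; it observes that after the last planning step the algorithm reduces to standard second-order SMO and invokes the known convergence proof of Fan et al.~\cite{fan:2005}, which -- as the paper remarks immediately after Corollary~\ref{corollary:convergence} -- holds \emph{without} any assumption that infinitely many steps are free. Your route cannot close this case with the tools in this paper: if free steps eventually cease, every remaining step is clipped, and the gain of a clipped step is not bounded below by any fixed fraction of $\tilde g_W(\alpha^{(t-1)})$ (the truncation can shrink the gain arbitrarily relative to the Newton gain), so Lemma~\ref{lemma:bound} gives you nothing and no infinite set $T_c$ satisfying the hypothesis of Theorem~\ref{theorem:convergence} exists; the promised ``separate compactness argument reusing Lemma~\ref{lemma:bound}'' is precisely the hard content of the external proof, not something Lemma~\ref{lemma:bound} supplies. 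A second, smaller problem in the same case: even your back-to-back-free-steps branch invokes Corollary~\ref{corollary:convergence}, whose hypothesis requires the working set~\eqref{eq:wss}, whereas after a clipped step the third branch of Algorithm~\ref{algo:modified-selection} selects $j$ by maximizing the exact gain $g$ rather than $\tilde g$, so the selection is not~\eqref{eq:wss} and a free step's gain then only dominates $g_{(i,j_W)}(\alpha^{(t-1)})$, not $\tilde g_W(\alpha^{(t-1)})$. Both issues disappear once you do what the paper does: cite the established convergence result for the (slightly modified) greedy SMO regime instead of rebuilding it from the positive-gain machinery.
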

\begin{proof}
Because the algorithm checks the exact KKT conditions the finite
stopping case is trivial. For the infinite case we distinguish two
cases.
If the sequence contains only finitely many steps which are planning
ahead then there exists $t_0 > 0$ such that in all iterations $t > t_0$
the algorithm coincides with Algorithm~\ref{algo:SMO} and the
convergence proof given in~\cite{fan:2005} holds. Otherwise there
exists an infinite sequence $(t_n)_{n \in \N}$ of planning steps. Now
there are at least two possibilities to apply the above results.
An easy one is as follows: From Lemma~\ref{lemma:gain-rate} we obtain a
constant $c = 1 - \eta^2$ such that Theorem~\ref{theorem:convergence}
implies the desired property.
Alternatively we can argue that the double-step gain is non-negative by
Lemma~\ref{lemma:gain-rate}. Algorithm~\ref{algo:predictor} ensures that
the SMO steps in iterations $t_n - 1$, $n \in \N$ just before the
planning-ahead steps are free.
Then we can apply Corollary~\ref{corollary:convergence}.
However, the second variant of the proof does not hold if we replace
Algorithm~\ref{algo:predictor} by Algorithm~\ref{algo:modified-step}.
\end{proof}

As already noted above Theorem~\ref{theorem:convergence} and
Corollary~\ref{corollary:convergence} resolve the separate handling of
different cases by the algorithm in a general manner. In the case
of an infinite sequence of planning-ahead steps the proof does not
consider the other iterations at all. This technique is similar to the
convergence proof for the Hybrid Maximum-Gain second order algorithm
presented in \cite{glasmachers:2006} which needs to cover different
cases to ensure convergence, too.

\section{Experiments}

The main emphasis of the experiments is to compare the PA-SMO algorithm
with the standard (greedy) SMO algorithm. The most recent LIBSVM
version~2.84 implements Algorithm~\ref{algo:SMO}. For comparison, we
implemented the modifications described in
Algorithm~\ref{algo:modified-selection} and
Algorithm~\ref{algo:predictor} directly into LIBSVM.

Note that in the first iteration starting from
$\alpha^{(0)} = (0, \dots, 0)^T$ the components $y_i = \pm 1$ of the
gradient $\nabla f(\alpha^{(0)}) = y$ take only two possible values.
The absolute values of these components are equal and they all point
into the box. Therefore the working set selection algorithm could select
any $i^{(1)} \in \Iup(\alpha^{(0)})$ as the first index, because the
gradient components of all indices are maximal. Thus, there is a freedom
of choice for the first iteration. LIBSVM arbitrarily chooses
$i^{(1)} = \max(\Iup(\alpha^{(0)}))$. Of course, this choice influences
the path taken by the optimization. Experiments indicate that this
choice can have a significant impact on the number of iterations and the
runtime of the algorithm. Now, on a fixed dataset, an algorithm may
appear to be superior to another one just because it is lucky to profit
more from the asymmetry than the competitor.
To reduce random effects, we created 100 random permutations of each
dataset. All measurements reported are mean values over these 100
permutations. Because the permutations were drawn i.i.d.\ we can apply
standard significance tests to our measurements.

We collected a set of 22 datasets for the performance comparison.
For the 13 benchmark datasets from \cite{raetsch:2001} we merged
training and test sets. The artificial chess-board
problem~\cite{glasmachers:2005} was considered because it corresponds
to quadratic programs which are very difficult to solve for SMO-type
decomposition algorithms. Because this problem is described by a known
distribution, we are in the position to sample datasets of any size from
it. We arbitrarily fixed three datasets consisting of $1,000$, $10,000$,
and $100,000$ examples.
Six more datasets were taken from the UCI benchmark
repository~\cite{uci:98}: The datasets \ds{connect-4},
\ds{king-rook-vs-king}, and \ds{tic-tac-toe} are extracted from games,
while \ds{ionosphere}, \ds{spambase}, and \ds{internet-ads} stem from
real world applications.

In all experiments we use the Gaussian kernel
\begin{align*}
	k(x_i, x_j) = \exp(-\gamma \, \|x_i - x_j\|^2)
\end{align*}
with the single kernel parameter $\gamma > 0$. The complexity control
parameter $C$ and the kernel parameter $\gamma$ were selected with grid
search on the cross-validation error to ensure that the parameters are
in a regime where the resulting classifiers generalize reasonably well,
see Table~\ref{tab:setting}.
\begin{table}
\begin{center}
\begin{tabular}{|l|r|rr|rr|}
\hline
\multicolumn{1}{|c}{dataset}	& \multicolumn{1}{|c}{$\ell$}	& \multicolumn{1}{|c}{$C$}	& \multicolumn{1}{c}{$\gamma$}	& \multicolumn{1}{|c}{SV}	& \multicolumn{1}{c|}{BSV}	\\
\hline
\ds{banana}						& \tabfont 5,300				& \tabfont 100				& \tabfont 0.25					& \tabfont 1,223			& \tabfont 1,199			\\
\ds{breast-cancer}				& \tabfont 277					& \tabfont 0.6				& \tabfont 0.1					& \tabfont 178				& \tabfont 131				\\
\ds{diabetis}					& \tabfont 768					& \tabfont 0.5				& \tabfont 0.05					& \tabfont 445				& \tabfont 414				\\
\ds{flare-solar}				& \tabfont 1,066				& \tabfont 1.5				& \tabfont 0.1					& \tabfont 744				& \tabfont 709				\\
\ds{german}						& \tabfont 1,000				& \tabfont 1				& \tabfont 0.05					& \tabfont 620				& \tabfont 426				\\
\ds{heart}						& \tabfont 270					& \tabfont 1				& \tabfont 0.005				& \tabfont 158				& \tabfont 149				\\
\ds{image}						& \tabfont 2,310				& \tabfont 100				& \tabfont 0.1					& \tabfont 301				& \tabfont 84				\\
\ds{ringnorm}					& \tabfont 7,400				& \tabfont 2				& \tabfont 0.1					& \tabfont 625				& \tabfont 86				\\
\ds{splice}						& \tabfont 3,175				& \tabfont 10				& \tabfont 0.01					& \tabfont 1,426			& \tabfont 7				\\
\ds{thyroid}					& \tabfont 215					& \tabfont 500				& \tabfont 0.05					& \tabfont 17				& \tabfont 3				\\
\ds{titanic}					& \tabfont 2,201				& \tabfont 1,000			& \tabfont 0.1					& \tabfont 934				& \tabfont 915				\\
\ds{twonorm}					& \tabfont 7,400				& \tabfont 0.5				& \tabfont 0.02					& \tabfont 734				& \tabfont 662				\\
\ds{waveform}					& \tabfont 5,000				& \tabfont 1				& \tabfont 0.05					& \tabfont 1,262			& \tabfont 980				\\
\hline
\ds{chess-board-1000}			& \tabfont 1,000				& \tabfont 1,000,000		& \tabfont 0.5					& \tabfont 41				& \tabfont 3				\\
\ds{chess-board-10000}			& \tabfont 10,000				& \tabfont 1,000,000		& \tabfont 0.5					& \tabfont 129				& \tabfont 84				\\
\ds{chess-board-100000}			& \tabfont 100,000				& \tabfont 1,000,000		& \tabfont 0.5					& \tabfont 556				& \tabfont 504				\\
\hline
\ds{connect-4}					& \tabfont 61,108				& \tabfont 4.5				& \tabfont 0.2					& \tabfont 13,485			& \tabfont 5,994			\\
\ds{king-rook-vs-king}			& \tabfont 28,056				& \tabfont 10				& \tabfont 0.5					& \tabfont 5,815			& \tabfont 206				\\
\ds{tic-tac-toe}				& \tabfont 958					& \tabfont 200				& \tabfont 0.02					& \tabfont 104				& \tabfont 0				\\
\ds{internet-ads}				& \tabfont 2,358				& \tabfont 10				& \tabfont 0.03					& \tabfont 1,350			& \tabfont 6				\\
\ds{ionosphere}					& \tabfont 351					& \tabfont 3				& \tabfont 0.4					& \tabfont 190				& \tabfont 8				\\
\ds{spam-database}				& \tabfont 4,601				& \tabfont 10				& \tabfont 0.005				& \tabfont 1,982			& \tabfont 583				\\
\hline
\end{tabular}
\end{center}
\caption{\label{tab:setting}Datasets used for the comparison. The
	dataset size, the regularization parameter $C$ and the kernel
	parameter $\gamma$ are given. The last two columns list the
	resulting total number of support vectors and the number of
	bounded support vectors. Due to the finite accuracy of the
	solutions these mean values are not always integers. For clarity
	we provide rounded values.}
\end{table}
All experiments were carried out on a Xeon 3~GHz CPU running Fedora
Linux.

\subsection{Results}

We performed 100 runs (corresponding to the 100 permutations) per
dataset for both algorithms and measured the runtime and the number of
iterations. The results are summarized in Table~\ref{tab:results}.

\begin{table}
\begin{center}
\begin{tabular}{|l|ccc|ccc|}
\hline
\multicolumn{1}{|c}{dataset}	& \multicolumn{3}{|c}{time}	& \multicolumn{3}{|c|}{iterations}	\\
								& SMO & & PA-SMO			& SMO & & PA-SMO					\\
\hline
\ds{banana} & 2.07 &  & 2.08 & 23295 & $>$ & 19721 \\
\ds{breast-cancer} & 0.02 &  & 0.02 & 313 & $>$ & 292 \\
\ds{diabetis} & 0.08 &  & 0.10 & 361 & $>$ & 358 \\
\ds{flare-solar} & 0.18 &  & 0.19 & 792 & $>$ & 744 \\
\ds{german} & 0.20 & $>$ & 0.19 & 908 & $>$ & 879 \\
\ds{heart} & 0.02 &  & 0.02 & 113 &  & 112 \\
\ds{image} & 0.45 &  & 0.46 & 6553 & $>$ & 6359 \\
\ds{ringnorm} & 2.41 &  & 2.27 & 1569 & $>$ & 1537 \\
\ds{splice} & 4.04 & $>$ & 3.92 & 6643 & $>$ & 5854 \\
\ds{thyroid} & 0.02 &  & 0.01 & 744 & $>$ & 667 \\
\ds{titanic} & 0.54 & $>$ & 0.47 & 3375 & $>$ & 1653 \\
\ds{twonorm} & 2.67 &  & 2.65 & 641 &  & 642 \\
\ds{waveform} & 3.03 &  & 2.99 & 1610 & $>$ & 1539 \\
\hline
\ds{chess-board-1000} & 3.86 & $>$ & 2.98 & 1883310 & $>$ & 1186963 \\
\ds{chess-board-10000} & 76.72 &  & 75.36 & 32130476 & $>$ & 24997371 \\
\ds{chess-board-100000} & 475.37 & $>$ & 428.18 & 145364030 & $>$ & 105199379 \\
\hline
\ds{connect4-0.2} & 1268.04 &  & 1243.56 & 82076 & $>$ & 77690 \\
\ds{king-rook-vs-king} & 272.80 &  & 273.06 & 69410 & $>$ & 64067 \\
\ds{tic-tac-toe} & 0.10 &  & 0.10 & 8321 & $>$ & 7786 \\
\ds{internet-ads} & 2.38 &  & 2.31 & 2785 & $>$ & 2750 \\
\ds{ionosphere} & 0.03 &  & 0.04 & 411 & $>$ & 408 \\
\ds{spambase} & 8.36 &  & 8.36 & 9641 & $>$ & 9171 \\
\hline
\end{tabular}
\end{center}
\caption{\label{tab:results}Comparison of standard SMO
(Algorithm~\ref{algo:SMO}) and planning-ahead SMO
(Algorithm~\ref{algo:PA-SMO}). Mean time in seconds and number of
iterations are listed. The ``$>$'' sign indicates that the left value
is statistically significantly larger than the right value (paired
Wilcoxon rank rum test, $p=0.05$ over 100 permutations of the datasets).
The left value is in no case significantly smaller than the right one.}
\end{table}

There is a clear trend in these results. For some datasets the PA-SMO
algorithm significantly outperforms the SMO algorithm, while for other
datasets there is no significant difference. Most important, PA-SMO
performs {\em in no case} worse than standard SMO.

The number of iterations is significantly reduced in nearly all cases.
This result is not surprising. It basically means that the algorithm
works as expected. However, early iterations working on the whole
problem take much longer than late iterations after shrinking has more
or less identified the interesting variables. Therefore it is natural
that the number of iterations is only a weak indicator for the runtime.
The runtime of the PA-SMO algorithm is usually slightly reduced in the
mean. This difference is significant in 5 cases. However, the striking
argument for the algorithm is that it never performs worse than the
standard SMO algorithm.

Although both algorithms use the same stopping condition the dual
objective values achieved slightly varies. A careful check of these
values reveals that the PA-SMO algorithm consistently achieves better
solutions (paired Wilcoxon rank sum test, $p = 0.05$) for all datasets
but \ds{chess-board-100000}.
Thus, the speed up is not a trivial effect of reduced solution quality.
The tests reveal that the contrary is the case, that is, the new
algorithm outputs better solutions in less time.

\subsection{Influence of Planning-Ahead vs.\ Working Set Selection}

It is interesting to look a little bit behind the scenes. Recall that we
changed two parts of the SMO algorithm. The truncated Newton step was
replaced by the planning-ahead Algorithm~\ref{algo:predictor} and the
working set selection was modified accordingly by
Algorithm~\ref{algo:modified-selection}. It is possible to use the
second modification without the first one, but hardly vice versa.
Therefore, we ran the SMO algorithm with the modified working set
selection but without planning ahead to get a grip on the influence of
these changes on the overall performance. That is, we made sure that the
algorithm selects the working set used two iterations ago if it is a
feasible direction and maximizes the Newton step gain $\tilde g$.
While the results of the comparison to standard SMO were completely
ambiguous, the PA-SMO algorithm turned out to be clearly superior.
Thus, the reason for the speed up of PA-SMO is not the changed working
set selection, but planning-ahead.

\subsection{Planning-Ahead Step Sizes}

To understand how planning-ahead is really used by the algorithm we
measured the quantity $\mu / \mu^* - 1$, that is, the size of the
planning-ahead step relative to the Newton step. For free SMO steps
this quantity is always~$0$, for larger steps it is positive, for
smaller steps negative, and for steps in the opposite direction it
is even smaller than~$-1$. We present some representative histograms
in Figure~\ref{fig:histograms}.
These histograms reveal that most planning-steps are only slightly
increased compared to the Newton step size, but there are cases where
the algorithm chooses a step which is enlarged by a factor of several
thousands. However, very few steps are reduced or even reversed, if any.

\begin{figure}[h]
\begin{center}
\psfrag{0}[c][l]{$-100$}
\psfrag{1}[c][l]{$-3.16$}
\psfrag{2}[c][l]{$0$}
\psfrag{3}[c][l]{$3.16$}
\psfrag{4}[c][l]{$100$}
\psfrag{5}[c][l]{$31623$}
\psfrag{100}[r][l]{$10^0$}
\psfrag{101}[r][l]{$10^1$}
\psfrag{102}[r][l]{$10^2$}
\psfrag{103}[r][l]{$10^3$}
\psfrag{104}[r][l]{$10^4$}
\psfrag{105}[r][l]{$10^5$}
\psfrag{106}[r][l]{$10^6$}
\psfrag{107}[r][l]{$10^7$}
\psfrag{108}[r][l]{$10^8$}
\psfrag{banana}[c][l]{\ds{banana}}
\psfrag{heart}[c][l]{\ds{heart}}
\psfrag{image}[c][l]{\ds{image}}
\psfrag{twonorm}[c][l]{\ds{twonorm}}
\psfrag{chess1000}[c][l]{\ds{chess-board-1000}}
\psfrag{spambase}[c][l]{\ds{spambase}}
\includegraphics[width=0.95\textwidth]{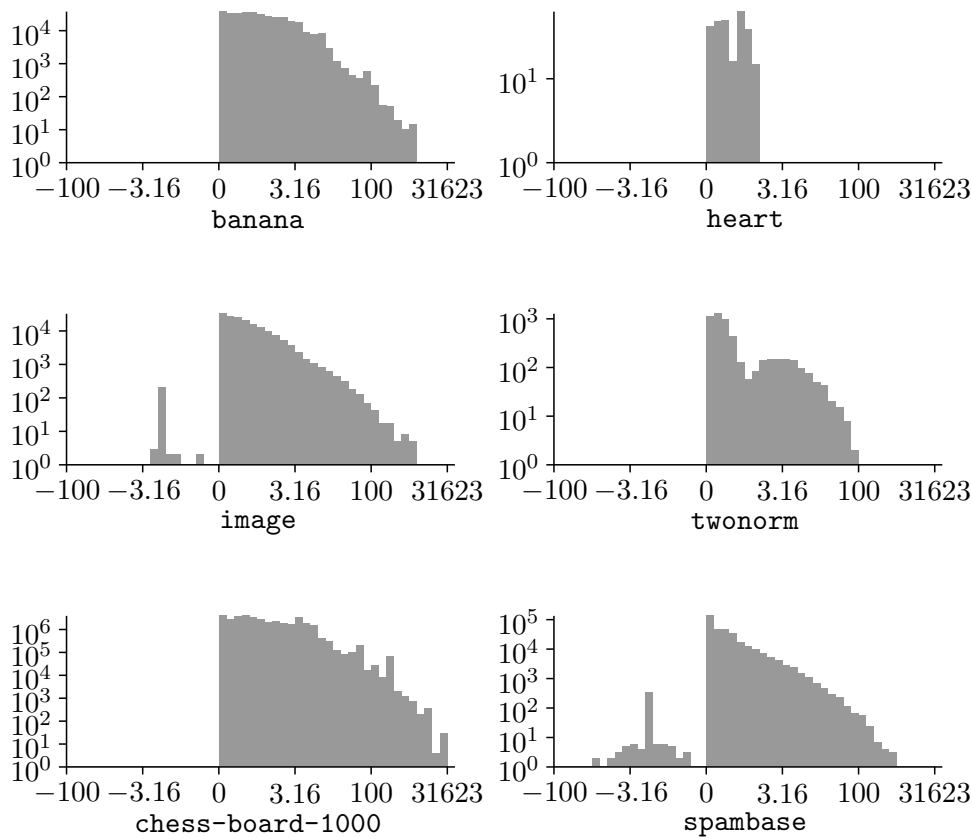}
\caption{\label{fig:histograms}
Histograms (number of iterations) of the planning-step size $\mu$
divided by the Newton step size $\mu^*$, minus $1$.
On both axes a logarithmic scale is used to increase the resolution for
small values (to achieve this effect for the $x$-axis we used the
parameterization $t \mapsto \sign(t) \cdot (10^{t^2/2} - 1)$ which
is symmetric around the Newton step size corresponding to the origin of
$t = \mu / \mu^* - 1$, with a high resolution around this point).
The rightmost bin counts all steps which exceed the scale, which is
actually the case for the \ds{chess-board-1000} dataset.}
\end{center}
\end{figure}

Obviously the step size histograms are far from symmetric. Therefore it
is natural to ask whether a very simple increase of the Newton step size
can be a good strategy. By heretically using
\begin{align*}
	\mu^{(t)} = \max \left\{ \min \left\{ 1.1 \cdot \frac{l_t}{Q_{tt}}, \tilde U_t \right\}, \tilde L_t \right\}
\end{align*}
instead of equation~\eqref{eq:clipped-newton} we still achieve
$1 - 0.1^2 = 99\%$ of the SMO gain in each iteration
(see Figure~\ref{fig:parabola1}) and avoid the drawback of the more
complicated computations involved when planning-ahead. Further, this
strategy can be implemented into an existing SMO solver in just a few
seconds. Experiments indicate that it is surprisingly successful, no
matter if the original working set selection or
Algorithm~\ref{algo:modified-selection} is used.
For most simple problems it performs as good as the much more refined
PA-SMO strategy. However, for the extremely difficult \ds{chess-board}
problem this strategy performs significantly worse.

\subsection{Multiple Planning-Ahead}

We now turn to the variant of the PA-SMO algorithm which uses more than
one recent working set for planning-ahead. This variant, as explained at
the end of section~\ref{sec:algorithms}, plans ahead with multiple
candidate working sets. Further, these working sets are additional
candidates for the working set selection. We can expect that the number
of iterations decreases the more working sets are used this way.
However, the computations per iteration of course increase, such that
too many working sets will slow the entire algorithm down. Thus, there
is a trade-off between the number of iterations and the time needed per
iterations. Now the interesting question is whether there is a uniform
best number of working sets for all problems.

\begin{figure}[h]
\begin{center}
\psfrag{1}[c][l]{$1$}
\psfrag{2}[c][l]{$2$}
\psfrag{3}[c][l]{$3$}
\psfrag{5}[c][l]{$5$}
\psfrag{10}[c][l]{$10$}
\psfrag{20}[c][l]{$20$}
\psfrag{80}[r][l]{$80\,\%$}
\psfrag{90}[r][l]{$90\,\%$}
\psfrag{100}[r][l]{$100\,\%$}
\psfrag{110}[r][l]{$110\,\%$}
\psfrag{120}[r][l]{$120\,\%$}
\psfrag{banana}[l][l]{\dss{banana}}
\psfrag{flare-solar}[l][l]{\dss{flare-solar}}
\psfrag{german}[l][l]{\dss{german}}
\psfrag{image}[l][l]{\dss{image}}
\psfrag{ringnorm}[l][l]{\dss{ringnorm}}
\psfrag{splice}[l][l]{\dss{splice}}
\psfrag{titanic}[l][l]{\dss{titanic}}
\psfrag{twonorm}[l][l]{\dss{twonorm}}
\psfrag{waveform}[l][l]{\dss{waveform}}
\psfrag{chess-board-1000}[l][l]{\dss{chess-board-1000}}
\psfrag{chess-board-10000}[l][l]{\dss{chess-board-10000}}
\psfrag{chess-board-100000}[l][l]{\dss{chess-board-100000}}
\psfrag{connect-4}[l][l]{\dss{connect-4}}
\psfrag{king-rook-vs-king}[l][l]{\dss{king-rook-vs-king}}
\psfrag{internet-ads}[l][l]{\dss{internet-ads}}
\psfrag{spambase}[l][l]{\dss{spambase}}
\includegraphics[width=\textwidth]{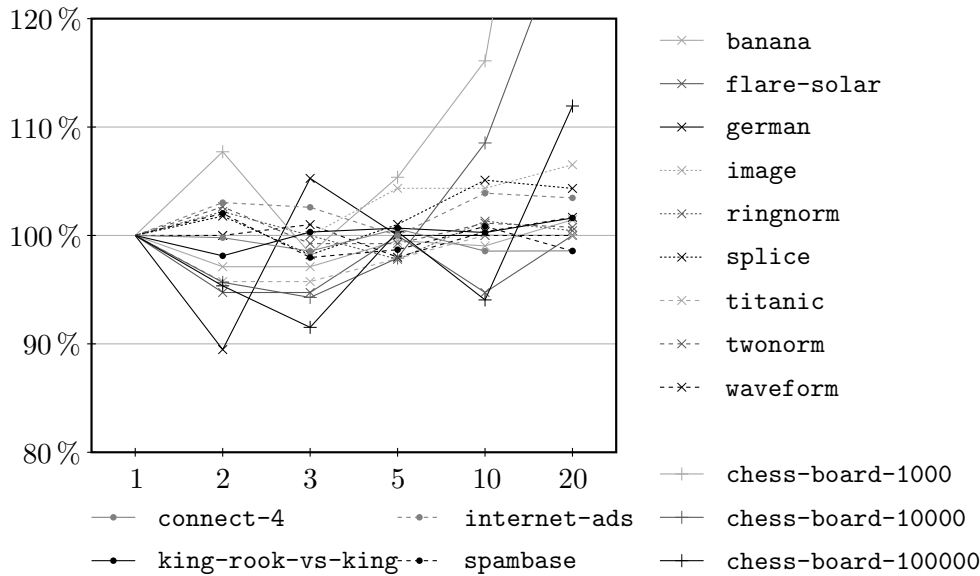}
\caption{\label{fig:multiple-ws}
The curves show the performance (mean runtime) of the PA-SMO algorithm
with $1$, $2$, $3$, $5$, $10$, and $20$ most recent working sets.
All results are normalized with the runtime of the standard variant
(only the most recent working set is used for planning ahead).
Only datasets with runtimes above $100$~ms are plotted, because
otherwise the deviation due to the measurement accuracy shadows the
effect.}
\end{center}
\end{figure}

We performed experiments with the $2$, $3$, $5$, $10$, and $20$ most
recent working sets. It turned out that the strategies considering the
most recent two or three working sets perform comparable to standard
PA-SMO, and even slightly better. For $5$, and more drastically, for
$10$ or $20$ working set evaluations the performance drops, see
Figure~\ref{fig:multiple-ws}.
This result makes clear that we do not lose much when completely
ignoring the multiple working set selection strategy, and at the same
time we stay at the safe side. Therefore it seems reasonable to stick
to the standard form of the PA-SMO algorithm. On the other hand we can
get another small improvement if the two or three most recent working
sets are considered.

\section{Conclusion}

We presented the planning-ahead SMO algorithm (PA-SMO), which is a
simple yet powerful improvement of SMO.
At a first glance it is surprising that the truncated Newton step used
in all existing variants of the SMO algorithm can be outperformed. This
becomes clear from the greedy character of decomposition iterations.
The experimental evaluation clearly shows the benefits of the new
algorithm. As we never observed a decrease in performance, we recommend
PA-SMO as the default algorithm for SVM training.
PA-SMO is easy to implement based on existing SMO solvers.
Due to the guaranteed convergence of the algorithm to an optimal
solution the method is widely applicable. Further, the convergence
proof introduces a general technique to address the convergence of
hybrid algorithms.

\begin{algorithm}
	\SetKwRepeat{InfiniteLoop}{do}{loop}
	\KwIn{feasible initial point $\alpha^{(0)}$, accuracy $\varepsilon \geq 0$, $\eta \in (0, 1)$}
	$G^{(0)} \leftarrow \nabla f(\alpha^{(0)}) = y - K \alpha^{(0)}$, $p \leftarrow \text{true}$, $t \leftarrow 1$

	\InfiniteLoop{}
	{
		\eIf{p = \text{true}}
		{
			$i^{(t)} \leftarrow \argmax\{G_n^{(t-1)} \enspace|\enspace n \in \Iup(\alpha^{(t-1)})\}$\\
			$j^{(t)} \leftarrow \argmax\{\tilde g_{(i^{(t)}, n)}(\alpha^{(t-1)}) \enspace|\enspace n \in \Idown(\alpha^{(t-1)}) \setminus \{i\} \}$\\
			$B^{(t)} \leftarrow (i^{(t)}, j^{(t)})$
		}
		{
			\eIf{$1-\eta \leq \mu^{(t-1)} / \mu^* \leq 1+\eta$}
			{
				$i^{(t)} \leftarrow \argmax\{G_n^{(t-1)} \enspace|\enspace n \in \Iup(\alpha^{(t-1)})\}$\\
				$j^{(t)} \leftarrow \argmax\{\tilde g_{(i^{(t)}, n)}(\alpha^{(t-1)}) \enspace|\enspace n \in \Idown(\alpha^{(t-1)}) \setminus \{i\} \}$\\
				$B^{(t)} \leftarrow (i^{(t)}, j^{(t)})$\\
				\textbf{if} $\tilde g_{B^{(t-2)}}(\alpha^{(t-1)}) > \tilde g_{B^{(t)}}(\alpha^{(t-1)})$ \textbf{then} $B^{(t)} \leftarrow B^{(t-2)}$
			}
			{
				$i^{(t)} \leftarrow \argmax\{G_n^{(t-1)} \enspace|\enspace n \in \Iup(\alpha^{(t-1)})\}$\\
				$j^{(t)} \leftarrow \argmax\{g_{(i^{(t)}, n)}(\alpha^{(t-1)}) \enspace|\enspace n \in \Idown(\alpha^{(t-1)}) \setminus \{i\} \}$\\
				$B^{(t)} \leftarrow (i^{(t)}, j^{(t)})$\\
				\textbf{if} $g_{B^{(t-2)}}(\alpha^{(t-1)}) > g_{B^{(t)}}(\alpha^{(t-1)})$ \textbf{then} $B^{(t)} \leftarrow B^{(t-2)}$
			}
		}
		$\mu^* \leftarrow l_t / Q_{tt} = (v_{B^{(t)}}^T G^{(t-1)}) / (v_{B^{(t)}}^T K v_{B^{(t)}})$\\
		\eIf{previous iteration performed a free SMO step}
		{
			Compute the planning-ahead step size $\mu = \frac{Q_{22} w_1 - Q_{12} w_2}{\det(Q)}$ (eq.~\eqref{eq:planning-step})\\
			$\qquad$ assuming $B^{(t-1)}$ as the next working set\\
			\eIf{the current or the planned step ends at the box boundary}
			{
				$\mu^{(t)} \leftarrow \max \left\{ \min \left\{ \mu^*, \tilde U_t \right\}, \tilde L_t \right\}$ (eq.~\eqref{eq:clipped-newton})\\
				$p \leftarrow \text{false}$
			}
			{
				$\mu^{(t)} \leftarrow \mu$\\
				$p \leftarrow \text{true}$
			}
		}
		{
			$\mu^{(t)} \leftarrow \max \left\{ \min \left\{ \mu^*, \tilde U_t \right\}, \tilde L_t \right\}$ (eq.~\eqref{eq:clipped-newton})\\
			$p \leftarrow \text{false}$
		}
		$\alpha^{(t)} \leftarrow \alpha^{(t-1)} + \mu^{(t)} \cdot v_{B^{(t)}}$\\
		$G^{(t)} \leftarrow G^{(t-1)} - \mu^{(t)} K v_{B^{(t)}}$\\
		\textbf{stop if} $\left( \max \big\{ G^{(t)}_i\ \,\big|\, i \in \Iup(\alpha^{(t)}) \big\} - \min \big\{ G^{(t)}_j \,\big|\, j \in \Idown(\alpha^{(t)}) \big\} \right) \leq \varepsilon$\\
		$t \leftarrow t + 1$
	}
	\caption{The complete PA-SMO Algorithm}
	\label{algo:PA-SMO}
\end{algorithm}

\vskip 0.2in
\bibliographystyle{plain}
\bibliography{SmoStepSize}

\end{document}